\definecolor{gray128}{RGB}{128,128,128}
\definecolor{gray64}{RGB}{64,64,64}
\DeclareMathOperator*{\argmin}{argmin}
\newtheorem{definition}{Definition}
\newtheorem{fact}{Fact}
\newtheorem{lemma}{Lemma}
\newtheorem{theorem}{Theorem}
\newtheorem{corollary}{Corollary}
\newtheorem{assumption}{Assumption}
\newcommand{\blambda}{{\bm\lambda}}
\newcommand{\bvarphi}{{\bm\varphi}}
\newcommand{\bb}{{\mathbf b}}
\newcommand{\bs}{{\mathbf s}}
\newcommand{\x}{{\mathbf x}}
\newcommand{\y}{{\mathbf y}}
\newcommand{\z}{{\mathbf z}}
\newcommand{\hx}{\hat{x}}
\newcommand{\cO}{{\mathcal O}}
\newcommand{\realset}{{\mathbb R}}
\newcommand{\bitm}{\begin{itemize}}
\newcommand{\eitm}{\end{itemize}}
\newcommand{\benum}{\begin{enumerate}}
\newcommand{\eenum}{\end{enumerate}}
\newcommand{\beq}{\begin{equation}}
\newcommand{\eeq}{\end{equation}}
\newcommand{\beqa}{\begin{eqnarray}}
\newcommand{\eeqa}{\end{eqnarray}}
\newcommand{\bary}{\begin{array}}
\newcommand{\eary}{\end{array}}
\newcommand{\bdefn}{\begin{definition}}
\newcommand{\edefn}{\end{definition}}
\newcommand{\bfct}{\begin{fact}}
\newcommand{\efct}{\end{fact}}
\newcommand{\blem}{\begin{lemma}}
\newcommand{\elem}{\end{lemma}}
\newcommand{\bthm}{\begin{theorem}}
\newcommand{\ethm}{\end{theorem}}
\newcommand{\bcor}{\begin{corollary}}
\newcommand{\ecor}{\end{corollary}}
\newcommand{\bassum}{\begin{assumption}}
\newcommand{\eassum}{\end{assumption}}
\newcommand{\bprf}{\begin{proof}}
\newcommand{\eprf}{\end{proof}}
\title{Efficient Energy Minimization for Enforcing Statistics}
\author[1]{Yongsub Lim\thanks{yongsub@kaist.ac.kr}}
\author[1]{Kyomin Jung\thanks{kyomin@kaist.edu}}
\author[2]{Pushmeet Kohli\thanks{pkohli@microsoft.com}}
\affil[1]{Korea Advanced Institute of Science and Technology}
\affil[2]{Microsoft Research Cambridge}
\date{}
\begin{document}

\maketitle

\begin{abstract}
Energy minimization algorithms, such as graph cuts, enable the computation of
the MAP solution under certain probabilistic models such as Markov random
fields. However, for many computer vision problems, the MAP solution under the
model is not the ground truth solution. In many problem scenarios, the system
has access to certain statistics of the ground truth. For instance, in image
segmentation, the area and boundary length of the object may be known. In these
cases, we want to estimate the most probable solution that is consistent with
such statistics, i.e., satisfies certain equality or inequality constraints.

The above constrained energy minimization problem is NP-hard in
general, and is usually solved using Linear Programming formulations, which
relax the integrality constraints. This paper proposes a novel method that
finds the discrete optimal solution of such problems by maximizing the
corresponding Lagrangian dual. This method can be applied to any constrained
energy minimization problem whose unconstrained version is polynomial time
solvable, and can handle multiple, equality or inequality, and linear or
non-linear constraints. We demonstrate the efficacy of our method on the
foreground/background image segmentation problem, and show that it produces
impressive segmentation results with less error, and runs more than $20$ times
faster than the state-of-the-art LP relaxation based approaches.
\end{abstract}

\section{Introduction}
Energy minimization has become a key tool for various computer vision problems
such as image segmentation~\cite{BlakeRBPT04,boykoviccv01}, 3D
reconstruction~\cite{sinhaiccv05,Vogiatzis05}, and stereo~\cite{szeliskieccv06}.
It is equivalent to performing Maximum A Posterior (MAP) inference in Markov
Random Fields (MRF), and in general, is NP-hard~\cite{borosdam02}. However,
there are subclasses of energy functions such as submodular functions, and thus
it can be solved in polynomial
time~\cite{borosdam02,kohlicvpr07,kohlicvpr08,kol02}. Also many approximation
algorithms based on belief propagation~\cite{yedidianips01}, tree reweighted
message passing~\cite{kolmogorovpami06}, and graph
cuts~\cite{boykovpami01,Komodakis07} have been studied.

Although sophisticated energy minimization algorithms have enabled researchers
to compute the exact MAP solution under many probabilistic models, they do not
provide the desired solution of image labelling problems.
The reason for this is that most probabilistic models used in computer vision
such as MRFs are mis-specified i.e., the most probable solution under the model
is not the ground truth (desired solution)~\cite{szeliskieccv06,PletscherNKR11}.
This has led researchers to consider more sophisticated
models~\cite{kohlicvpr08,rothcvpr05}, but these too have not been shown to be
free from mis-specification.

For many vision problems, the system may have access to certain statistics of
the ground truth. For instance, in the case of the foreground/background image
segmentation problem, the area and boundary length of the object to be segmented
may be known. Similarly, for 3D voxel segmentation, we may know the volume and
surface area of the object. In these scenarios, researchers require computation
of the most probable solution under the model that is consistent with such
statistics, or in other words, the solution satisfies certain equality or
inequality constraints. Note that the inequality constraint allows researchers,
for instance, to find the most probable segmentation that is bigger or smaller
than a specified value in the image segmentation problem. While imposing such
constraints in the optimization problem results in an improved solution for the
image labelling problem, the resulting optimization problem becomes NP-hard in
general.

The constrained optimization problem (MAP inference problem) is usually solved
using Linear Programming formulations that relax the integrality
constraints~\cite{lempitskyiccv09,nowozincvpr09}. However, this approach suffers
from a number of drawbacks. First, the size of the resulting Linear Program can
be very large, making the algorithm very slow. Second, rounding error introduced
while transforming the fractional solution of the LP to a discrete solution can
be arbitrarily large and may result in solutions having a high energy value.
Lastly, although LP relaxation based methods can deal with linear constraints,
they cannot handle powerful second order or higher order constraints such as the
boundary length constraint described in Section \ref{sec:matstat}.

In this paper, we propose a novel method to solve constrained energy
minimization problems that can handle linear and higher order constraints.
In contrast to previous methods that employ a continuous relaxation of the
problem, we directly exploit its discrete structure. We consider the Lagrangian
dual of the primal constrained problem, and directly compute a discrete solution
corresponding to the dual maximum. It is then guaranteed that such a solution is
the lowest energy solution with its statistics (see Lemma \ref{lemma:1}).
Hence, solutions yielded by our method can be more accurate than those by LP
relaxation based methods with rounding.
Through extensive experiments on the image segmentation problem, we demonstrate
that the proposed method produces solutions with less error, and runs more than
$20$ times faster compared with state-of-the-art continuous relaxation based
methods. Although we show the efficacy of our method via the image
segmentation problem, the method is not limited to a particular problem and can
be applied to various constrained energy minimization problems in computer
vision and machine learning.

\section{Related Work}
Our work has been inspired from a number of recent studies on enforcing
statistics during
inference~\cite{KlodtC11,KolmogorovBR07,LimJK10,Werner08,woodfordiccv09,KohliK10}. These
methods can be divided into two broad categories on the basis of how they
enforce constraints: methods that enforce statistics softly by incorporating
global potentials in probabilistic models, and methods that use statistics as
hard constraints during inference.

Many studies have considered the problem of performing inference in
probabilistic models that contain global potentials that encourage solutions to
have a particular distribution of
labels~\cite{Werner08,woodfordiccv09,KohliK10}. Werner~\cite{Werner08} and
Kohli and Kumar~\cite{kohlicvpr08} proposed inference algorithms that can
handle a global potential that encourages a specific number of variables to be
assigned a particular label. This potential can be used for encouraging
foreground segmentations to be of a particular size, i.e., to cover a specific
number of pixels. Woodford et al.~\cite{woodfordiccv09} considered histogram
distribution preserving potentials for image labelling problems such as image
denoising and texture synthesis and proposed a number of sophisticated methods
for performing inference in models containing such potentials.

The methods discussed above enforce statistics softly, and there is no guarantee
that their solution would have statistics that match the desired statistics. In
fact the statistics can be very different from the desired statistics. A number
of inference algorithms that ensure that solutions have the
desired statistics have been proposed. One of the first constraints used for
image labelling problems was the silhouette constraint, used for the problem of
3D reconstruction~\cite{koleveccv08,sinhaiccv05}. This constraint ensures that a
ray emanating from any silhouette pixel must pass through at least one voxel
that belongs to the `object'. Algorithms that ensure topological
constraints such as connectivity of the object segmentation~\cite{VicenteKR08}
or that the boundary of the object segmentation is close to the sides of a
bounding box~\cite{lempitskyiccv09} have also been proposed. 


Convex relaxation has been widely used to solve a discrete
optimization problem.
Ravikumar and Lafferty~\cite{RavikumarL06} proposed a convex quadratic
relaxation for MAP inference, and it was shown to be more accurate than LP
relaxation and propagation based methods.
Another approach that is closely related to the present work is the method of
Klodt and Cremers~\cite{KlodtC11}, which produces impressive results for
labelling problems such as image segmentation. This method works by solving a
continuous relaxation of the constrained energy minimization problem and
rounding its solution to obtain a discrete solution.
Although their methods can handle multiple constraints, they cannot easily deal
with constraints on second order statistics  (such as the length of the
segmentation boundary) without significantly increasing the computational cost. 
For a more detailed discussion, please refer to Section \ref{sec:dis_con}.

We now discuss the most relevant methods to the present work. Kolmogorov et
al.~\cite{KolmogorovBR07} and Lim et al.~\cite{LimJK10} considered the problem
of minimizing a submodular energy function under the `label count' constraint.
For the foreground/background segmentation problem, the label count constraint
ensures that a specific number of pixels take the foreground label.
Both methods use the parametric mincut algorithm for guiding their search for a
lowest energy solution that satisfies the constraint. The results of these
methods have been shown to improve the accuracy under Hamming error with respect to
the ground truth~\cite{KolmogorovBR07,LimJK10}. However, both methods suffer
from the limitation that they can handle only one constraint at a time.
In contrast, our method can handle multiple, equality or inequality, and linear
or non-linear constraints.


\section{Setup and Preliminaries} \label{sec:setup}

This section provides the notations and definitions used in the paper. It also
provides our formulation for the constrained energy minimization problem and
explains the optimality certificate associated with the solutions of our method.

\subsection{Energy Minimization}

Many computer vision problems are formulated using a random field model defined
over a graph $G=(V,E)$. The energy of the random field is the negative log of
the posterior probability distribution. For image labelling problems with two
labels, such as image segmentation and volumetric segmentation, the energy has
the following form:
\begin{equation} 
	E(\x) = \sum_{c\in C_G} \phi_c(\x_c), 
\end{equation} 
where $\x\in\{0,1\}^n$, $C_G$ is
the set of cliques in $G$, and $\phi_c$ is called a potential and is defined
over a clique $c$. Although minimizing $E$ is NP-hard in general, if $E$ is submodular,
it becomes solvable in polynomial time. Moreover, if an energy function is
defined over cliques of size up to $2$ as 
\begin{equation} \label{eq:e_pair} 
	E(\x) = \sum_{i\in V} \phi_i(x_i) + \sum_{(i,j)\in E} \phi_{ij}(x_i,x_j), 
\end{equation} 
the problem can be efficiently solved by a st-mincut algorithm. Owing to
this property, submodular second order energy functions are widely used in
computer vision for problems such as image segmentation.

\subsection{Constrained Energy Minimization} \label{ssec:const_em}

In this paper, we tackle the problem of minimizing an energy function under
certain pre-specified constraints. This problem is formulated as
\begin{equation} \label{eq:ourineqprob}
	\min_{\x\in\{0,1\}^n} \left\{ E(\x) : b^-_i\leq h_i(\x) \leq b^+_i, 1\leq i\leq
	m \right\},
\end{equation}
where $h_i:\{0,1\}^n \rightarrow \realset$ and $b^-_i,b^+_i\in\realset$. We
denote $(h_1(x),\ldots,h_m(x))$ by $H(x)$. In the context of image segmentation,
$h_i$ may correspond to statistics such as the size and boundary length of an
object. For instance, $h_i = \sum_i x_i$ is the number of pixels that are
assigned the label 1 (foreground).

Before explaining how to solve the inequality problem, we first examine the equality case as follows.
\begin{equation} \label{eq:oureqprob}
	\min_{\x\in\{0,1\}^n} \left\{ E(\x) : H(\x) = \bb \right\}.
\end{equation}
To solve \eqref{eq:oureqprob}, we exploit the Lagrangian dual $D(\blambda)$ of \eqref{eq:oureqprob}.
\begin{equation} \label{eq:eqdual}
	D(\blambda) = \min_{\x\in\{0,1\}^n} L(\x,\blambda),
\end{equation}
where $\blambda\in\realset^m$, and
\begin{equation} \label{eq:lag}
	L(\x,\blambda) = E(\x) + \blambda^T(H(\x) - \bb).
\end{equation}
Note that for any $\bb$, maximizing $D(\blambda)$ gives a lower bound of the
minimum for \eqref{eq:oureqprob}. This leads us to the following known
lemma~\cite{Guignard03}, which, as will become apparent later, provides us with
an optimality certificate that guarantees that our solution is the lowest energy
solution with its statistics.

\begin{lemma}\label{lemma:1}
	Let $\x^*$ be such that $L(\x^*,\blambda^*) = D(\blambda^*)$ for some
	$\blambda^*$, and $\bb^* = H(\x^*)$. Then, $E(\x^*) = \min_{\x\in\{0,1\}^n}
	\{E(\x) : H(\x) = \bb^*\}$.
\end{lemma}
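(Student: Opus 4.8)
The plan is to prove this by a short direct argument using weak duality, exploiting the fact that $\x^*$ attains the minimum in the definition of $D(\blambda^*)$.

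First I would fix an arbitrary feasible $\x$ for the problem $\min\{E(\x) : H(\x) = \bb^*\}$, i.e.\ any $\x \in \{0,1\}^n$ with $H(\x) = \bb^*$, and compare $E(\x)$ to $E(\x^*)$. The key observation is that, by the definition \eqref{eq:eqdual} of $D(\blambda^*)$ with the equality constraint vector taken to be $\bb^*$, we have $D(\blambda^*) = \min_{\y} L(\y,\blambda^*) = \min_{\y} \{E(\y) + (\blambda^*)^T(H(\y) - \bb^*)\} \le E(\x) + (\blambda^*)^T(H(\x) - \bb^*)$. Since $H(\x) = \bb^*$, the Lagrangian term vanishes and this reads $D(\blambda^*) \le E(\x)$. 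On the other hand, by hypothesis $L(\x^*,\blambda^*) = D(\blambda^*)$, and $L(\x^*,\blambda^*) = E(\x^*) + (\blambda^*)^T(H(\x^*) - \bb^*) = E(\x^*)$ because $\bb^* = H(\x^*)$ by definition. Hence $E(\x^*) = D(\blambda^*) \le E(\x)$.

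It remains only to note that $\x^*$ itself is feasible for this problem, since $H(\x^*) = \bb^*$ by construction, so the infimum is attained at $\x^*$ and $E(\x^*) = \min_{\x\in\{0,1\}^n}\{E(\x) : H(\x) = \bb^*\}$, which is exactly the claim. One mild subtlety to state carefully is that the Lagrangian dual $D$ in \eqref{eq:eqdual} is nominally defined relative to a fixed constraint vector $\bb$; here we must read it with $\bb$ replaced by $\bb^*$, so that the inner minimization $\min_\y L(\y,\blambda^*)$ uses the shifted Lagrangian $E(\y) + (\blambda^*)^T(H(\y) - \bb^*)$. The hypothesis $L(\x^*,\blambda^*) = D(\blambda^*)$ should likewise be understood with this same $\bb^*$, which is the natural reading and the one that makes the statement true.

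Honestly, there is no real obstacle here: the whole content is the standard weak-duality / complementary-slackness fact that if a primal point attains the Lagrangian infimum and is feasible (here with zero duality gap automatically, since the constraint residual is zero), it is primal optimal for the problem with that particular right-hand side. The only thing to be careful about is the bookkeeping of which $\bb$ appears where, as noted above; everything else is a one-line chain of (in)equalities.
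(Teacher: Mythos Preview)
Your proof is correct and follows essentially the same weak-duality argument as the paper: both take an arbitrary $\x$ with $H(\x)=\bb^*$, use that $\x^*$ minimizes $L(\cdot,\blambda^*)$, and conclude $E(\x^*)\le E(\x)$ because the Lagrangian penalty terms agree. Your worry about $\bb$ versus $\bb^*$ is a non-issue, since the argmin of $L(\cdot,\blambda^*)$ is unaffected by the additive constant $-(\blambda^*)^T\bb$; the paper sidesteps this entirely by comparing $L(\x^*,\blambda^*)\le L(\x,\blambda^*)$ directly and observing that $L(\x,\blambda^*)-L(\x^*,\blambda^*)=E(\x)-E(\x^*)$ whenever $H(\x)=H(\x^*)$, regardless of which $\bb$ appears in $L$.
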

\begin{proof}
	Let $\x$ satisfy that $H(\x) = \bb^*$. Then,
	$\blambda^T(H(\x)-\bb^*)=\blambda^T(H(\x^*)-\bb^*) = 0$ for all $\blambda\in
	\realset^m$. This implies $L(\x^*,\blambda^*) \leq L(\x,\blambda^*)$. Thus,
	from \eqref{eq:lag}, $E(\x^*)\leq E(\x)$.	
\end{proof}

\textcolor{black}{ Note that for a fixed $\x$, $L(\x,\blambda)$ corresponds to an
$(m+1)$-dimensional hyperplane, and when $H(\x)=\bb$, the slope of the
hyperplane becomes $0$. Also note that $D(\blambda)$ is defined by the pointwise
minimum of a finite number of hyperplanes, that is, $D(\blambda)$ is piecewise
linear concave. From these facts, if a hyperplane with slope $0$ contributes to
the shape of the dual $D(\blambda)$, that hyperplane corresponds to a primal
optimal solution and we can find it by maximizing $D(\blambda)$.
Even though the optimal solution whose slope is $0$ does not contribute to the
shape of $D(\blambda)$, we can still compute a (primal) solution at the dual
maximum. Such a (primal) solution $\hx$ has a slope close to $0$, i.e.
$H(\hx)\approx \bb$, and by Lemma \ref{lemma:1}, it is optimal among
$\x\in\{0,1\}^n$ such that $H(x) = H(\hx) \approx \bb$. Thus, it is a good
approximate solution for \eqref{eq:oureqprob}. The algorithm in Section \ref{sec:alg}
essentially computes this solution.}

Now we return to the inequality constrained problem.
An inequality constraint can be handled as an equality one with the insertion of
a {\em slack variable} $y_i\in\realset$. As a first step, we rewrite problem
\eqref{eq:ourineqprob} as follows.
\begin{equation} \label{eq:ourineqprob2}
	\min_\x \left\{ E(\x) : \bb - \bs \leq H(\x) \leq \bb \right\},
\end{equation}
where $\bb=\bb^+$, and $\bs = \bb^+ - \bb^-$. The corresponding equality
problem then becomes
\begin{equation}
	\min_{\x,\y} \left\{ \hat{E}(\x,\y) : H(\x)+\y = \bb \right\},
\end{equation}
where $y_i \in [0,s_i]$, 
and $\hat{E}(\x,\y) = E(\x)$. The following
is the corresponding Lagrangian.
\begin{equation}
	\hat{L}(\x,\y,\blambda) = \hat{E}(\x,\y) + \blambda^T (H(\x) + \y - \bb).
\end{equation}
Let $(\x^*,\y^*)$ be a minimizer of $\hat{L}$ for some $\blambda$; then $y^*_i
= 0$ if $\lambda_i > 0$, $y^*_i = s_i$ if $\lambda_i < 0$, and $y^*_i$ can be any
number in $[0,s_i]$ if $\lambda_i = 0$. Hence, $\y^*$ only depends on
$\blambda$, and the dual is as follows. 
\begin{equation} \label{eq:ineqdual}
	\hat{D}(\blambda) = \min_{\x}\left\{ E(\x) + \blambda^T(H(\x)+ \y^*(\blambda) -\bb)
	\right\}.
\end{equation}
This form is the same as \eqref{eq:eqdual}. Thus, we can handle an inequality
constrained problem as the corresponding equality constrained problem.
We obtain the following Lemma, which shows the optimality of our solution.


\begin{lemma}
	Let $(\x^*,\y^*)$ be such that $\hat{L}(\x^*,\y^*,\blambda^*) =
	\hat{D}(\blambda^*)$ for some $\blambda^*$, and $\bb^* = H(\x^*)+\y^*$. Then
	$E(\x^*) = \min_\x \left\{ E(\x) : \bb^* - \bs\leq H(\x) \leq \bb^* \right\}$.
\end{lemma}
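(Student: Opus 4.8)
The plan is to reduce the inequality statement to the equality case already handled in Lemma \ref{lemma:1}, using the slack-variable reformulation developed in the preceding paragraphs. First I would observe that the pair $(\x^*,\y^*)$ minimizing $\hat L(\cdot,\cdot,\blambda^*)$ over $\x\in\{0,1\}^n$ and $\y$ with $y_i\in[0,s_i]$ is, by construction, exactly a minimizer of the Lagrangian of the equality problem $\min_{\x,\y}\{\hat E(\x,\y):H(\x)+\y=\bb\}$ of \eqref{eq:ineqdual}. Applying Lemma \ref{lemma:1} (with the variable ``$\x$'' there instantiated as the pair $(\x,\y)$, the energy as $\hat E$, and the constraint map as $H(\x)+\y$) gives $\hat E(\x^*,\y^*)=\min\{\hat E(\x,\y):H(\x)+\y=\bb^*,\ y_i\in[0,s_i]\}$, where $\bb^*=H(\x^*)+\y^*$. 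Since $\hat E(\x,\y)=E(\x)$, the left side is just $E(\x^*)$.

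Next I would translate the equality-constrained minimum over $(\x,\y)$ back into the inequality-constrained minimum over $\x$ alone. The key elementary fact is that, for a fixed $\x$, there exists $\y$ with $y_i\in[0,s_i]$ and $H(\x)+\y=\bb^*$ if and only if $\bb^*-\bs\le H(\x)\le\bb^*$ componentwise; and whenever such a $\y$ exists it can be anything satisfying $y_i=b^*_i-h_i(\x)\in[0,s_i]$, so it does not affect the objective value $E(\x)$. Hence $\{\hat E(\x,\y):H(\x)+\y=\bb^*,\ y_i\in[0,s_i]\}$ and $\{E(\x):\bb^*-\bs\le H(\x)\le\bb^*\}$ have the same set of attained values, and in particular the same minimum. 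Combining with the previous paragraph yields $E(\x^*)=\min_\x\{E(\x):\bb^*-\bs\le H(\x)\le\bb^*\}$, which is the claim.

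I do not anticipate a genuine obstacle here; the content is entirely in setting up the correspondence cleanly. The one point that deserves care is to confirm that the $(\x^*,\y^*)$ supplied in the hypothesis really does minimize $\hat L(\cdot,\cdot,\blambda^*)$ over the \emph{full} feasible set of $(\x,\y)$ (all $\x\in\{0,1\}^n$ and all $\y$ with $y_i\in[0,s_i]$), not merely over $\x$ with $\y$ pinned to the optimal $\y^*(\blambda^*)$; this is exactly the observation, made just before \eqref{eq:ineqdual}, that the minimizing $\y$ depends only on $\blambda$, so minimizing over $\x$ after substituting $\y^*(\blambda)$ is the same as jointly minimizing over $(\x,\y)$. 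Once that is noted, Lemma \ref{lemma:1} applies verbatim and the rest is the bookkeeping described above. A one-line alternative that avoids invoking Lemma \ref{lemma:1} as a black box is to repeat its short argument directly: for any feasible $\x$ with $\bb^*-\bs\le H(\x)\le\bb^*$, pick the matching $\y$, note $(\blambda^*)^T(H(\x)+\y-\bb^*)=0=(\blambda^*)^T(H(\x^*)+\y^*-\bb^*)$, deduce $\hat L(\x^*,\y^*,\blambda^*)\le\hat L(\x,\y,\blambda^*)$ from optimality of $(\x^*,\y^*)$, and conclude $E(\x^*)\le E(\x)$.
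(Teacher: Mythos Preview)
Your proposal is correct. The paper's own proof is precisely your ``one-line alternative'': it takes an arbitrary $(\x,\y)$ with $H(\x)+\y=\bb^*$, observes that both penalty terms vanish so $\hat L(\x^*,\y^*,\blambda^*)\le\hat L(\x,\y,\blambda^*)$ implies $E(\x^*)\le E(\x)$, leaving implicit the translation that any $\x$ with $\bb^*-\bs\le H(\x)\le\bb^*$ admits such a $\y$. Your primary plan---invoking Lemma~\ref{lemma:1} as a black box on the pair $(\x,\y)$ and then translating back---is a slightly more explicit repackaging of the same argument, not a genuinely different route.
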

\begin{proof}
	Let $(\x,\y)$ satisfy $H(\x)+\y = \bb^*$. Then, $\blambda^T (H(\x)+\y-\bb^*) =
	\blambda^T (H(\x^*) + \y^* - \bb^*)$ for any $\blambda\in \realset^m$. This
	implies $\hat{L}(\x^*,\y^*,\blambda^*)\leq \hat{L}(\x,\y,\blambda^*)$. Then
	$\hat{E}(\x^*,\y^*)\leq \hat{E}(\x,\y)$, and finally we get $E(\x^*)\leq E(\x)$
	from the definition of $\hat{E}$.
\end{proof}

\section{Algorithm to Maximize the Dual} \label{sec:alg}

We have seen that an energy minimization problem with inequalities can be solved
by considering the corresponding problem with equalities. Thus, in this section
we focus on solving the problem with equalities only. To this end, we employ the
standard cutting plane algorithm~\cite{Guignard03} for maximizing the Lagrangian
dual.
The cutting plane algorithm runs iteratively by computing an $(m+1)$-dimensional
hyperplane consisting of the dual $D$ one by one until finding the maximum.
In the following, we assume the existence of a polynomial time oracle to
compute $D(\blambda)$ for any $\blambda\in S$, where $S=\prod_{i=1}^m
[M_i^-,M_i^+]$ is a large enough set.
Note that when \eqref{eq:e_pair} is submodular and constraints are linear,
$D(\blambda)$ can be efficiently computed by the graph cut algorithm for any
$\blambda\in S$ with $M^-_i=-\infty$ and $M^+_i=\infty$.
In Section \ref{sec:matstat}, we show that this holds for many useful
constraints. We denote the oracle call by the function:
\beq
	\cO(\blambda) = \argmin_{\x\in\{0,1\}^n} L(\x,\blambda).
\eeq

As we mentioned, each $\x\in\{0,1\}^n$ corresponds to a hyperplane
$\{(\blambda,z)\in \realset^{m+1} : \blambda\in \realset^m, v=L(\x,\blambda)\}$.
The cutting plane algorithm works by iteratively finding $\blambda\in S$, for
which the oracle call needs to be made. Let $X=\{\x_1,\ldots,\x_k\}$ be a set of
computed solutions by the oracle call until iteration $k$; we then compute the
maximum $(\blambda^*,z^*)\in \realset^{m+1}$ of the following linear programming
over the variables $(\blambda,z)\in \realset^{m+1}$.
\begin{align} \label{eq:dualprog}
	\max_{z\in \realset} & ~~~~ z \nonumber \\ 
	\text{subject to} & ~~~~ z\leq L(\x,\blambda), \\
	& ~~~~ \x \in X \text{ and } \blambda\in S.
\end{align}
We let $(\blambda^*,z^*)$ be the optimal solution of \eqref{eq:dualprog}.
We subsequently call the oracle to obtain an optimal solution $\x_{k+1}$ for
$\blambda^*$, that is, $\x_{k+1}=\cO(\blambda^*)$. If $z^* =
L(\x_{k+1},\blambda^*)$, the algorithm terminates with the output $\x_{k+1}$;
otherwise, it updates $X=X\cup \{\x_{k+1}\}$, and repeats the procedure.
Note that as new hyperplanes are added over the iterations, the optimal value
of \eqref{eq:dualprog} becomes smaller, and the condition $z^* =
L(\x_{k+1},\blambda^*)$ guarantees the optimality. Algorithm \ref{fig:alg}
summarizes the whole procedure.

\begin{algorithm}[t]
\SetKwRepeat{Repeat}{do}{while}

\LinesNumbered
\DontPrintSemicolon
\BlankLine
\KwIn{Oracle $\cO$}
\KwOut{$\x^*$}
\BlankLine
$X \leftarrow \emptyset$ \;
\Repeat{$L(\x,\blambda^*) < z^*$}{
Compute the maximum $(\blambda^*,z^*)$ by the linear program
\eqref{eq:dualprog}. \; 
$\x \leftarrow \cO(\blambda^*)$, and $X \leftarrow X \cup \{\x\}$ \; }
$\x^* \leftarrow \x$ \;
\caption{Cutting Plane Algorithm} \label{fig:alg}
\end{algorithm}

The running time of the algorithm is dominated by the oracle call and solving
the LP in \eqref{eq:dualprog}. Note that the number of variables, $m$, of
\eqref{eq:dualprog} is small, regardless of that of the primal problem.
Also, the number of constraints is at most $2m+r$, where $r$ is the number of
iterations.
Note that $r$ is bounded by the number of possible $b$ values, 
which is $poly(n)$ for all constraints explained in Section \ref{sec:matstat}. 
For example, $r\leq n$ for the size constraint.
Hence, the running time of our algorithm is polynomial in $n$ for all those constraints.
We also observed that in practice, the number of iterations is 
at most $50$ for almost all cases, thus inducing very fast running time (see Section \ref{sec:exp}).


\section{Constraints for Matching Statistics} \label{sec:matstat}
In this section, we explain how our method can be applied to labelling problems
in vision such as image segmentation. The energy function for the pairwise
random field model for image segmentation can be written as 
\begin{equation}
	\label{eq:e_exp} E(\x) = \sum_{i\in V} \phi_i(x_i) + \sum_{(i,j)\in E}
	C_{ij}|x_i-x_j|,
\end{equation}
where $V$ is the set of all pixels, $E$ is the set of all neighboring edges,
$C_{ij}\geq 0$, and $\phi_i$ encodes the likelihood that each pixel $i$ belongs
to the object or background, and $C_{ij}$ encodes the smoothness of the boundary
of the object. Note that \eqref{eq:e_exp} is submodular since all the second
order terms are submodular. Functions such as \eqref{eq:e_exp} can be
efficiently minimized by solving an equivalent st-mincut
problem~\cite{boykoviccv01}. In the following, we introduce some useful
constraints for image segmentation, which can be handled by our method.

\subsection{Size Constraint (Sz)~\cite{LimJK10}} 
The most natural constraint for segmentation is the size (area for image
segmentation, volume for 3D segmentation) of the object being segmented. This
can be represented as
\beq \tag{Sz}
	b^-_1 \leq \sum_{i\in V} x_i \leq b^+_1.
\eeq

\subsection{Boundary Length Constraint (Br)}
The number of discontinuities in the labelling is a measure of the length of the
object boundary in image segmentation, and the surface area in 3D
reconstruction. These constraints can ensure that the segmentation boundary is
smooth. Hence, we suggest the following boundary length constraint:
\beq \tag{Br} 
	b^-_2 \leq \sum_{(i,j)\in E} |x_i-x_j| \leq b^+_2.
\eeq 
Note that $|x_i-x_j| = 1$ only when pixels $i$ and $j$ contribute to the
boundary of an object. When we use the boundary length constraint, the search
space $S$ for the cutting plane algorithm may be restricted to ensure
submodularity. Note that for some negative $\lambda_{\ell}\in\realset$, where
$\lambda_{\ell}$ is the Lagrangian multiplier for this constraint, the
Lagrangian $L(\x,\lambda_{\ell})$ is no longer submodular over $\x$.
We can ensure submodularity by truncating the search space $S$ such that
$\lambda_{\ell}$ is always larger than $-\min_{i,j} C_{ij}$.
This restriction appears to cause a problem in accuracy, but we observed that
the solution for the parameter $\lambda_{\ell}=0$ (the case of no constraint)
resulted in a speckled segmentation whose boundary length was much larger than
that of the ground truth in all cases. To obtain a shorter boundary length, we
need to search for $\lambda_{\ell}\geq 0$. In other words, the maximum of the
dual is found on a restricted search range,
$\lambda_{\ell}\in[-\min_{ij}C_{ij},\infty]$. Hence, this restriction does not
affect the accuracy in our case.

\subsection{Mean Constraint (Mn)~\cite{KlodtC11}}
The center of the object to be segmented can be easily specified by the user by
drawing a circle roughly containing the object. This information can be used to
define constraints on the mean horizonal and vertical coordinates of the pixels
belonging to the object to be segmented.
Enforcing the mean statistic involves the following constraint:
\beq \tag{Mn} \label{eq:Mn_const}
	b^-_3 \leq \frac{\sum_{i\in V} c_i x_i}{\sum_{i\in V}x_i} \leq b^+_3,
\eeq
where $b^-_3,b^+_3\in\realset^2$, $c_i=(h_i,v_i)^T$, and $h_i$ and $v_i$ are
the horizontal and vertical coordinates of a pixel $i$, respectively. Note that 
\eqref{eq:Mn_const} can be rewritten using a slack variable $y\in \realset^2$ as 
follows.   
\begin{gather}
	\sum_{i\in V} c_ix_i - y\sum_{i\in V} x_i = b^-_3 \sum_{i\in V} x_i, \\
	\sum_{i\in V} (c_i-b^-_3 - y)x_i = 0, \label{eq:translinear}
\end{gather}
where $y\in [0,b^+_{31}-b^-_{31}]\times[0,b^+_{32}-b^-_{32}]$. 
Note that if
we consider the dual $D(\blambda)$ with the mean constraint
\eqref{eq:translinear}, $y$ is determined only by $\blambda$. That is, 
for $i\in\{1,2\}$,
$y_i=b_{3i}^+-b_{3i}^-$ for $\lambda_i>0$, $y_i=0$ for
$\lambda_i<0$, and $y_i$ can be any number for $\lambda_i=0$. Consequently,
\eqref{eq:translinear} can be handled as a linear equality constraint for our
method.


\subsection{Variance Constraint (Vr)~\cite{KlodtC11}}
If we have knowledge of the center of an object, we can also impose a variance
constraint on the distance of the object pixels from the center as follows.
\beq \tag{Vr}
	b^-_4 \leq \frac{\sum_{i\in V} (c_i-\mu)^T(c_i-\mu)x_i }{\sum_{i\in V}x_i} \leq b^+_4, \\
\eeq
where $b^-_4,b^+_4\in \realset^2$, $c_i=(h_i,v_i)^T$, $h_i$ and $v_i$ are
horizontal and vertical coordinates of a pixel $i$, and $\mu\in \realset^2$
denotes the coordinate mean of the object. 
This constraint can be also handled in the same manner as done for Mn.


\subsection{Covariance Constraint (Cv)}
Similarly, a covariance constraint can also be enforced as
\beq \tag{Cv}
	b^-_5 \leq \frac{\sum_{i\in V} (h_i-\mu_h)(v_i-\mu_v)x_i}{\sum_{i\in V}x_i} \leq b^+_5.
\eeq

\subsection{Local Size Constraint (Lsz)}
This is a generalization of the size constraint (Sz). In contrast to the size
constraint, in this constraint, the size of the object being segmented is
locally fixed. For instance, we divide the image into $2\times 2$ subimages, and
impose the size constraint for each subimage. If we could obtain the size
information locally, it would be more powerful than the size constraint.
For instance, we can decompose an object having a complex boundary into parts having 
simpler boundaries and impose the size constraint for each part, respectively. 
In general,
subimages are allowed to overlap each other or may not cover the whole image.
When we enforce the size constraint for $c$ number of subimages, the local size
constraint can be represented as follows. For $1\leq j\leq c$,
\beq \tag{Lsz}
	b_{6j}^- \leq \sum_{i\in V_j} x_i \leq b_{6j}^+.
\eeq

\begin{figure}
\centering
	\includegraphics[width=0.9\columnwidth]{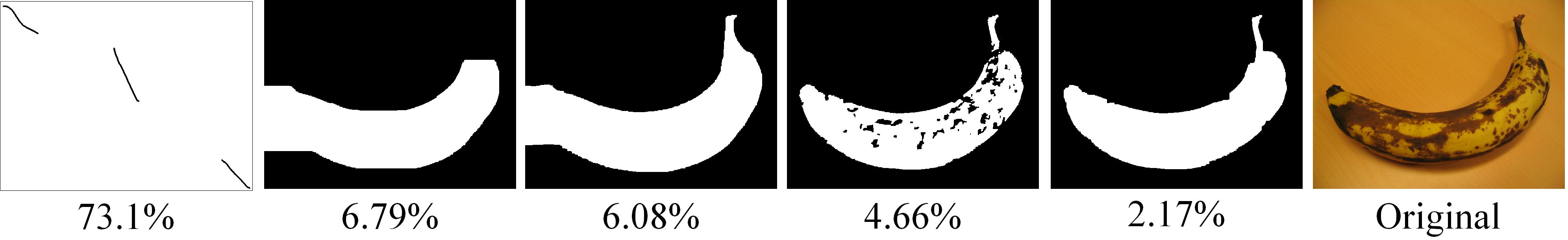}
	\caption{Example of a sequence of segmentations computed by the cutting plane
	algorithm to maximize the dual. The size (Sz) and boundary length constraints
	(Br) are considered in this figure. The segmentations shown in the figure are
	labeled with the percentage of erroneous pixels (inconsistent with the ground
	truth labels) in the segmentation. The image has a size of $640\times
	480$.}\label{fig:optpath}
\end{figure}

\begin{figure*}[t]
\centering
	\includegraphics[width=0.9\columnwidth]{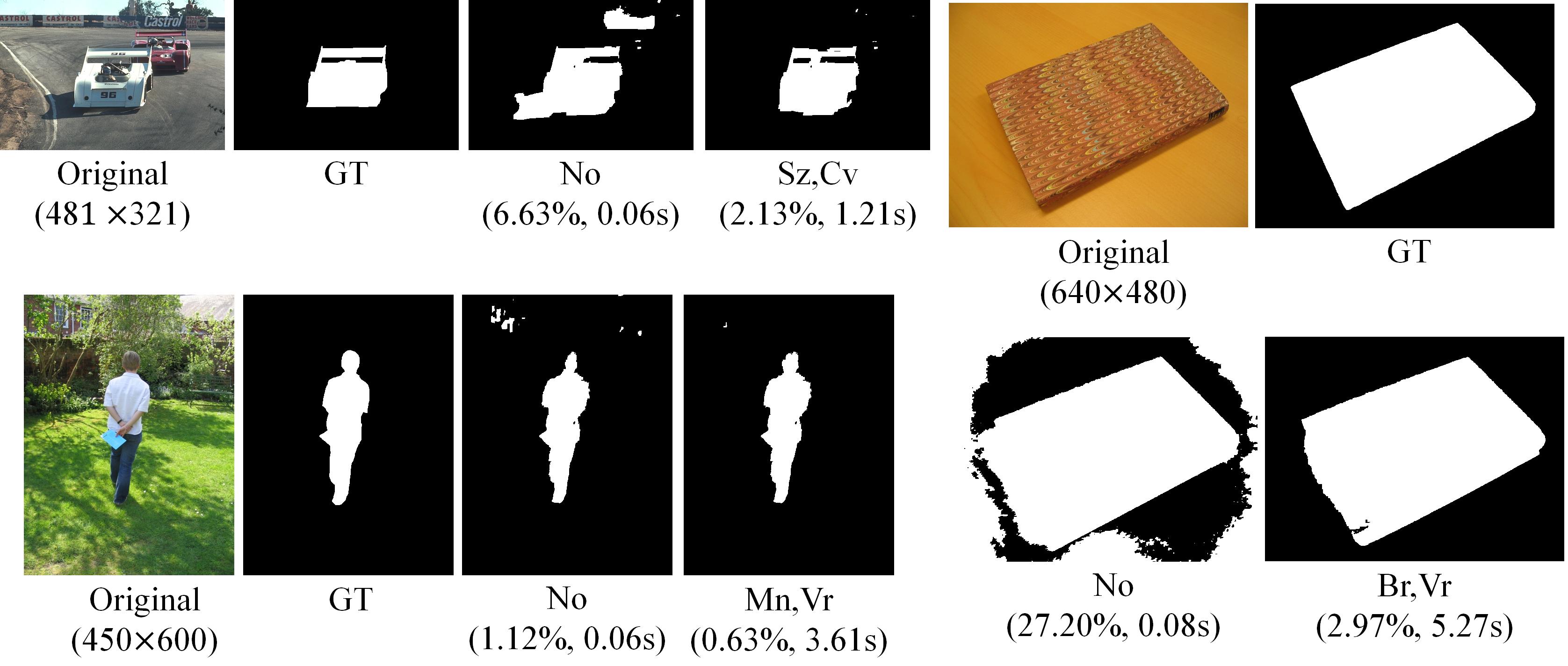}
	\caption{Effect of constraints on the segmentation results. Each segmentation
	is labelled by the percentage of erroneous pixels with respect to the ground
	truth, and the computation time.} \label{fig:impv}
\end{figure*}

All the constraints above, except Br, are linear in $\x$. Note that for a linear
constraint, we can adopt any search range $S$ in the cutting plane algorithm
since it does not break submodularity. Now, we can use any combination of these
constraints with an appropriate search range. \figurename~\ref{fig:optpath}
shows a sequence of  segmentations produced by our method for maximizing the
dual. \figurename~\ref{fig:impv} shows some examples of segmentations improved
by imposing constraints.

\section{Experiments} \label{sec:exp}
We now evaluate the performance of our algorithm on the image segmentation
problem using images from the GrabCut dataset~\cite{RotherKB04}. The first
experiment checks the accuracy of the segmentation results obtained with
different combinations of constraints. We also compare the results of our method
with those obtained by the continuous relaxation based methods
in~\cite{lempitskyiccv09,RavikumarL06}. 
To measure the quality of segmentations, we
compute the percentage of erroneously labelled pixels (with respect to the
ground truth(GT)). We use ER to denote this percentage error.
The experiment environment is a server with 2.8GHZ Quad-Core Intel i7 930 and
24G memory. 

\begin{figure}[t]
\centering
	\includegraphics[width=0.9\columnwidth]{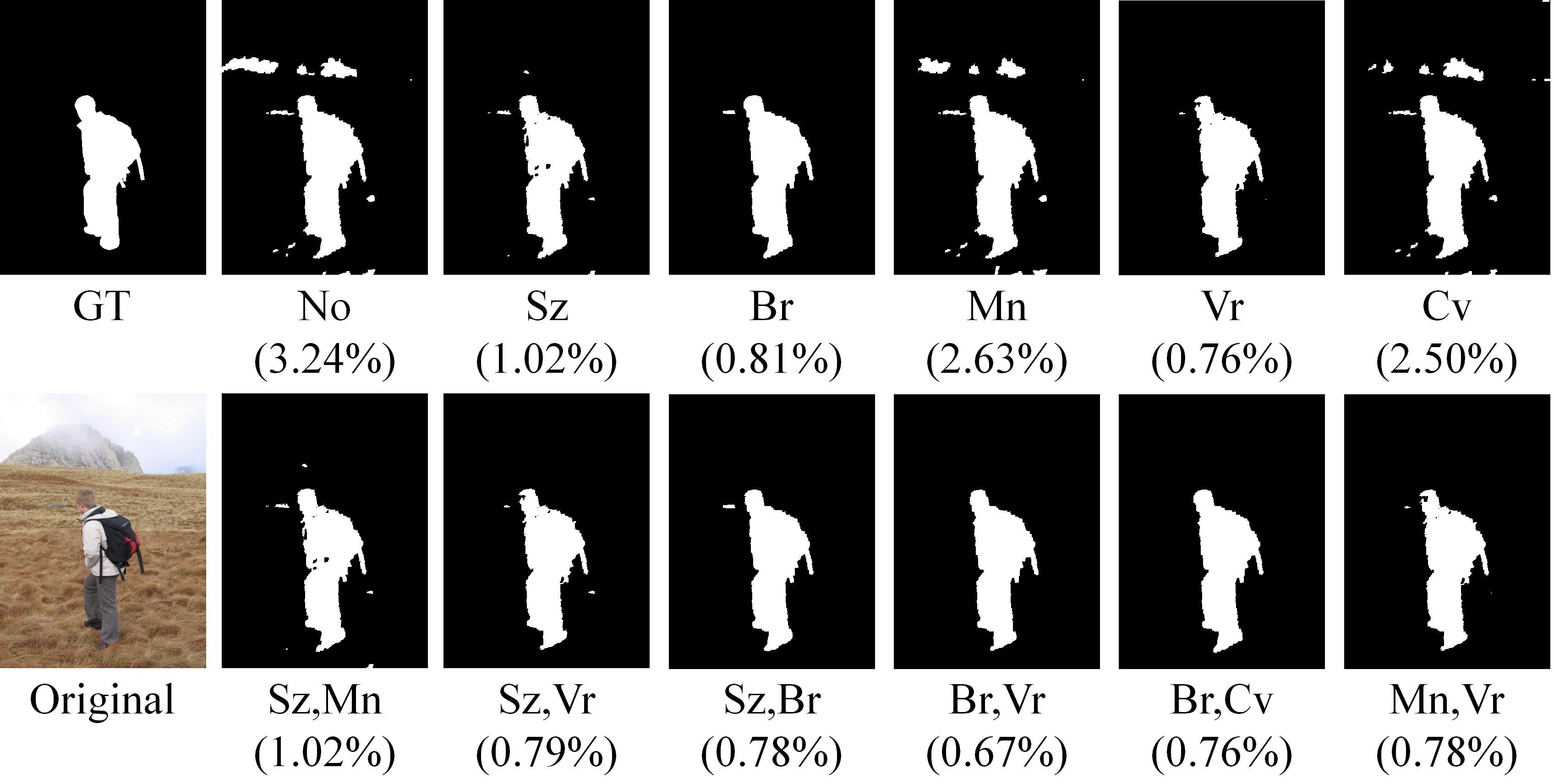}
	\caption{Comparison of segmentations computed with different combination of constraints. The reported number is the percentage error (ER) with respect to the ground truth segmentations. The size of the image is $450\times 600$.}	\label{fig:eg_seg}
\end{figure}

\subsection{Effect of Imposing Constraints}

\figurename~\ref{fig:eg_seg} shows segmentation results corresponding to various
combinations of constraints. We find that the variance constraint (Vr) is quite
effective: the results with constraint combinations including the variance
constraint have less error compared with the other results. Imposing the
boundary length constraint (Br) affected the smoothness of segmentation. For
instance, the ER values of segmentations with boundary length/covariance
constraints (Br, Cv) and variance constraint (Vr) are nearly the same, but the
former is much smoother than the latter. \tablename~\ref{tab:ineq_result} shows
quantitative results of our method. We assumed that statistics of the ground
truth such as area, mean, and variance are known, and set inequality gaps in
intervals of $-10\sim10\%$ and $-5\sim5\%$ from the ground truth values.
We considered various combinations of constraints. As expected, imposition of
any constraint improves the segmentation accuracy on average. From the table, we
can see that the size (Sz) and variance constraints (Vr) are especially powerful
for image segmentation. Note that the variance constraint (Vr) alone results in
a highly accurate segmentation. We speculate that this is because the variance
constraint causes the segmentation result to be rounder, and reduces noise
fragmentations. We provide more segmentation results by our method with various
constraint combinations in Appendix.

\begin{table}[t]
\small
\renewcommand{\arraystretch}{1.3}
\caption{Percentage pixel error (ER) and the running time for results obtained
with our method. Each number was averaged over $14$ images. Rows are sorted in
ER for an inequality gap interval of $-10 \sim 10\%$.} \label{tab:ineq_result}
\centering
\begin{tabular}{cccccc|cccccc} \hline 
	\multicolumn{2}{c}{\multirow{2}{15mm}{}} & \multicolumn{2}{c}{$-10\sim 10\%$} & \multicolumn{2}{c|}{$-5\sim 5\%$} &
	\multicolumn{2}{c}{\multirow{2}{15mm}{}} & \multicolumn{2}{c}{$-10\sim 10\%$} & \multicolumn{2}{c}{$-5\sim 5\%$} \\ \cline{3-6} \cline{9-12}
	&& ER$(\%)$ & Time(s) & ER$(\%)$ & Time(s) &&& ER$(\%)$ & Time(s) & ER$(\%)$ & Time(s) \\ \hline
	\multicolumn{1}{c}{$1$} & Sz,Br,Vr & 2.13 & 6.42 & 1.83 & 7.04 &
	\multicolumn{1}{c}{$12$} & Sz,Mn,Cv & 3.27 & 2.81 & 2.73 & 2.95 \\
	
	\multicolumn{1}{c}{$2$} & Sz,Vr & 2.30 & 2.05 & 2.00 & 2.22 &
	\multicolumn{1}{c}{$13$} & Sz,Mn & 3.29 & 1.64 & 2.72 & 1.84 \\		
	
	\multicolumn{1}{c}{$3$} & Vr,Cv & 2.30 & 2.41 & 2.12 & 2.40 &
	\multicolumn{1}{c}{$14$} & Sz & 3.49 & 0.65 & 3.10 & 0.69 \\
	
	\multicolumn{1}{c}{$4$} & Br,Vr & 2.33 & 3.92 & 2.16 & 4.87 &
	\multicolumn{1}{c}{$15$} & Br,Mn,Cv & 6.39 & 10.43 & 6.32 & 11.51 \\
	
	\multicolumn{1}{c}{$5$} & Vr & 2.41 & 1.27 & 2.28 & 1.37 &
	\multicolumn{1}{c}{$16$} & Br,Cv & 6.50 & 6.11 & 6.61 & 6.11 \\
	
	\multicolumn{1}{c}{$6$} & Br,Vr,Cv & 2.41 & 7.24 & 2.13 & 6.67 &
	\multicolumn{1}{c}{$17$} & Br,Mn & 6.57 & 6.12 & 6.22 & 7.34 \\
	
	\multicolumn{1}{c}{$7$} & Mn,Vr & 2.53 & 2.27 & 2.28 & 2.60 &
	\multicolumn{1}{c}{$18$} & Mn,Cv & 6.70 & 1.85 & 6.43 & 2.09 \\
	
	\multicolumn{1}{c}{$8$} & Sz,Br,Mn & 3.04 & 6.46 & 2.52 & 9.28 &
	\multicolumn{1}{c}{$19$} & Cv & 6.76 & 0.81 & 6.76 & 0.81 \\
	
	\multicolumn{1}{c}{$9$} & Sz,Br,Cv & 3.05 & 5.53 & 2.60 & 6.08 &		
	\multicolumn{1}{c}{$20$} & Mn & 6.79 & 0.93 & 6.12 & 1.07 \\
	
	\multicolumn{1}{c}{$10$} & Sz,Br & 3.19 & 3.57 & 2.40 & 3.39 &
	\multicolumn{1}{c}{$21$} & Br & 6.96 & 4.49 & 7.04 & 4.57 \\
	
	\multicolumn{1}{c}{$11$} & Sz,Cv & 3.32 & 1.54 & 3.00 & 1.60 &
	\multicolumn{1}{c}{$22$} & No & 7.05 & 0.06 & 7.05 & 0.06 \\

	\hline	\multicolumn{6}{c}{}\\
\end{tabular}
\end{table}

\subsection{Comparison with Continuous Relaxation}

We compared the proposed method with two continuous relaxation based methods.

\subsubsection{Linear Relaxation (LR)~\cite{lempitskyiccv09}}
The first one is to relax the problem to a linear programming using
additional variables $z\in\realset^{|E|}$, as follows.
\beq
\begin{split}
	\min_{\x\in[0,1]^n, \z\in\realset^{|E|}}& \bar{E}(\x,\z) := \bvarphi'  \x +
	\sum_{(i,j)\in E} C_{ij} z_{ij} \\
	\text{subject to}\quad & z_{ij}\geq x_i-x_j, \text{ and} \\
	&z_{ij}\geq x_j-x_i, \text{ for every $(i,j)\in E$},
\end{split}
\eeq
where $\varphi_i = \phi_i(1) - \phi_i(0)$. This LP relaxation can handle all
constraints described in Section \ref{sec:matstat}, but cannot deal with
both-side inequality or an equality boundary length constraint (Br), because
each of these breaks the convexity of the feasible region. Also, it is not
reasonable to use $\sum_{(i,j)\in E} z_{ij}$ as the value of the boundary length constraint (Br)
because this value is not exactly the same as $\sum_{(i,j)\in E} |x_i-x_j|$.
Hence, the boundary length constraint (Br) cannot be properly handled by the LR.

\subsubsection{Quadratic Relaxation (QR)~\cite{RavikumarL06}}  
For this relaxation, we first write our energy function as follows.
\beq
	E(\x) = \bvarphi' x_i + \sum_{(i,j)\in E} C_{ij}(x_i-x_j)^2.
\eeq
Note that $(x_i-x_j)^2 = |x_i-x_j|$ since $\x\in \{0,1\}^n$. The relaxed
problem can then be formulated as a convex quadratic programming.
\beq
	\min_{\x\in [0,1]^n} \tilde{E}(x) = \x'H\x + \bvarphi'\x,
\eeq
where $H_{ij} = -C_{ij}$ for $i\neq j$, and $H_{ii}= \sum_{j:(i,j)\in E}
C_{ij}$. This quadratic relaxation can handle all constraints described in
Section \ref{sec:matstat}, except the boundary length constraint (Br). As in LR,
both-side inequality or an equality boundary length constraint (Br) breaks the
convexity of the feasible region.

\subsubsection{Results}

\begin{figure}[t]
\centering
	\includegraphics[width=0.9\columnwidth]{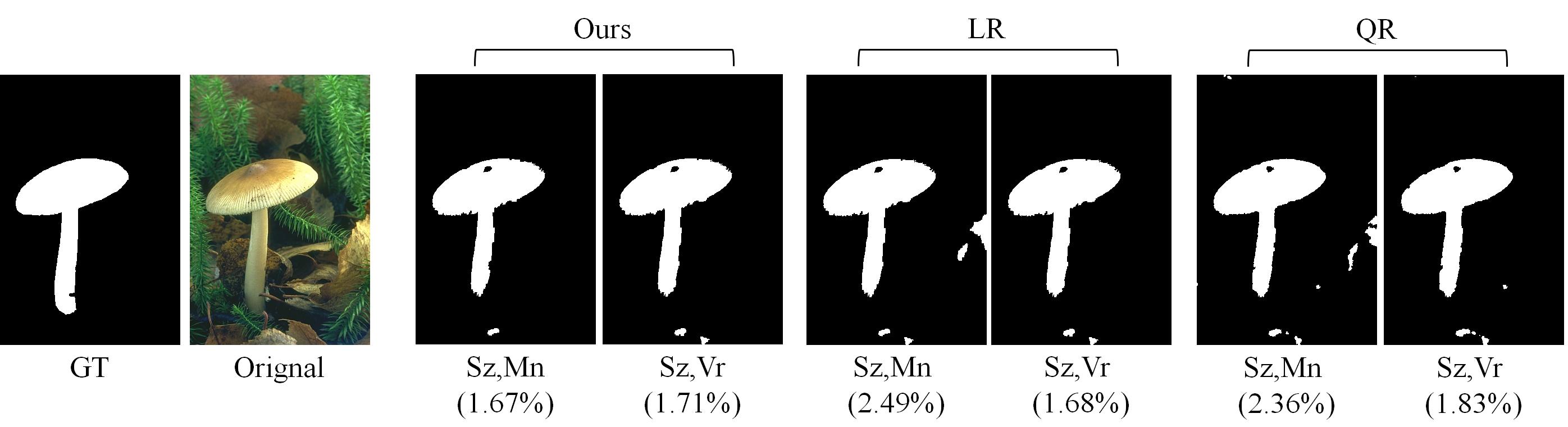}
	\caption{Examples of segmentations computed by LR, QR, and our method.
	Inequality gaps for each constraint combination are $-10\sim10\%$. The size of
	the image is $321\times 481$.}	 \label{fig:eg_cr_seg}
\end{figure}


We implemented the LR and QR based methods using CPLEX Optimizer with MATLAB
interface, and implemented our method by combining MATLAB and C++ codes.
\tablename~\ref{tab:comp_relax} summarizes the comparison between the proposed
approach and the relaxation based methods outlined above.

For all constraint combinations, our method produces segmentations with less
percentage pixel error (ER) compared to those obtained by LR and QR. Further,
our method is extremely fast: $200\sim300$ times faster than QR and $20\sim100$
times faster than LR. \figurename~\ref{fig:eg_cr_seg} shows some examples of
segmentation by the three methods. We provide more segmentation results to
compare our method with LR and QR in Appendix.

\begin{figure}
\centering
	\includegraphics[width=0.9\columnwidth]{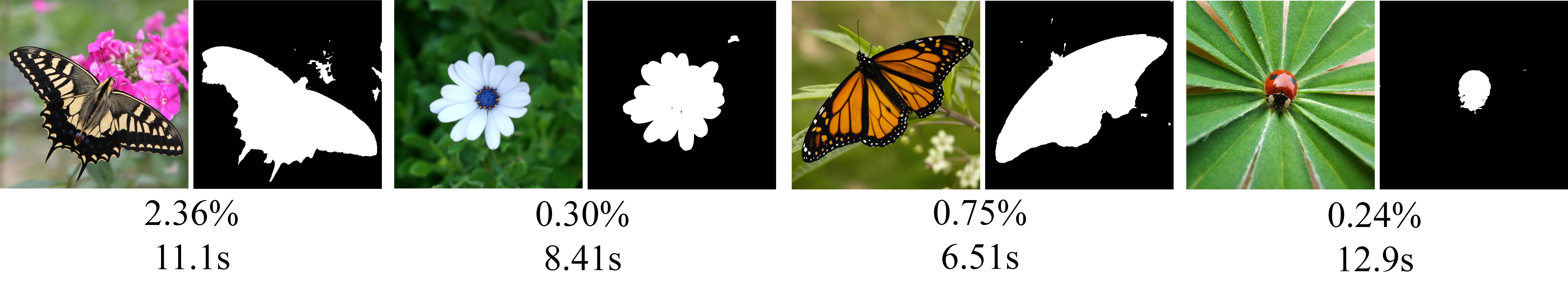}
	\caption{Example of segmentations by our method. Each original image has a
	size of $1024\times 1024$.}
	\label{fig:large_exp}
\end{figure}

\begin{table*}[t]
\renewcommand{\arraystretch}{1.3}
\caption{Comparison of the segmentation errors of our method with those
obtained from the continuous relaxation and rounding based methods. The
inequality gaps used for this experiment were $-10\sim10\%$, and $-5\sim 5\%$.
The percentage error is averaged over $6$ images of size $321\times
481$. Bold indicates the best error and time in each row.}
\label{tab:comp_relax}
\centering
\scriptsize
\begin{tabular}{ccccccc|cccccc} \hline
	&\multicolumn{6}{c|}{$\pm 10\%$}&\multicolumn{6}{c}{$\pm 5\%$} \\ \cline{2-13}
	\multirow{2}{8mm}{} & \multicolumn{2}{c}{Our Method} & \multicolumn{2}{c}{LR} &
	\multicolumn{2}{c|}{QR} &
	 \multicolumn{2}{c}{Our Method} & \multicolumn{2}{c}{LR} &
	 \multicolumn{2}{c}{QR}
	 \\
	 \cline{2-13}
	
	&ER$(\%)$ & Time(s) & ER$(\%)$ & Time(s) & ER$(\%)$ & Time(s) &
	ER$(\%)$ & Time(s) & ER$(\%)$ & Time(s) & ER$(\%)$ & Time(s)
	\\ \hline
	
	Sz & {\bf 3.49} & {\bf 0.65} & 4.22 & 39.7 & 4.70 & 280 &
	{\bf 3.10} & {\bf 0.69} & 3.84 & 68.4 & 4.48 & 316 \\
	
	Mn & 6.79 & {\bf 0.93} & {\bf 5.90} & 21.7 & 6.61 & 320 &
	6.12 & {\bf 1.07} & {\bf 5.67} & 26.8 & 6.23 & 336 \\
	
	Vr & {\bf 2.41} & {\bf 1.27} & 3.04 & 51.5 & 3.10 & 430 &
	{\bf 2.28} & {\bf 1.37} & 2.90 & 50.7 & 2.86 & 425  \\
	
	Sz,Mn & {\bf 3.29} & {\bf 1.64} & 4.09 & 57.7 & 4.50 & 307 &
	{\bf 2.72} & {\bf 1.84} & 3.46 & 83.6 & 3.95 & 349 \\
	
	Sz,Vr & {\bf 2.30} & {\bf 2.05} & 3.02 & 64.9 & 3.06 & 445 &
	{\bf 2.00} & {\bf 2.22} & 2.73 & 77.2 & 2.64 & 461\\
	
	Mn,Vr & {\bf 2.53} & {\bf 2.27} & 3.04 & 62.3 & 3.10 & 487 &
	{\bf 2.28} & {\bf 2.60} & 2.90 & 59.4 & 2.86 & 477 \\
	\hline \multicolumn{5}{c}{}
\end{tabular}
\end{table*}

Recall that our method runs by calling the oracle and solving the LP in
\eqref{eq:dualprog} at each iteration. Since we consider the pixel grid graph,
the oracle call, which involves minimizing a submodular pseudo-boolean function
using the graph cut algorithm, takes very little time. The number of variables
constituting the LP in \eqref{eq:dualprog} for our approach is small.
Furthermore, for all constraint combinations in \tablename~\ref{tab:comp_relax},
the total number of iterations does not exceed $50$ (average: $27$, std.:
$11.7$), making the number of constraints of the LP small. The small size of the
LP enables our method to run much faster than LR, which has to solves a
relatively large linear programming problem.

The results of our experiments comparing the computational complexity of the
three methods (ours, LR, QR) are  shown in \figurename~\ref{fig:sc}. We
used four $1024\times 1024$ images, and each image was scaled down to $8$ small
images of size $2^{i/2}\times 2^{i/2}$, $12\leq i\leq 19$. Each data point
denotes the average running time of the specified method over $4$ images having
the same size specified on the x-axis values. It can be seen that our methods is
substantially faster than the state-of-the-art LR and QR based approaches.
\figurename~\ref{fig:large_exp} shows segmentation results yielded by our method
for $4$ original images of size $1024\times 1024$.

\begin{figure}[t]
\centering
	\includegraphics[width=0.8\columnwidth]{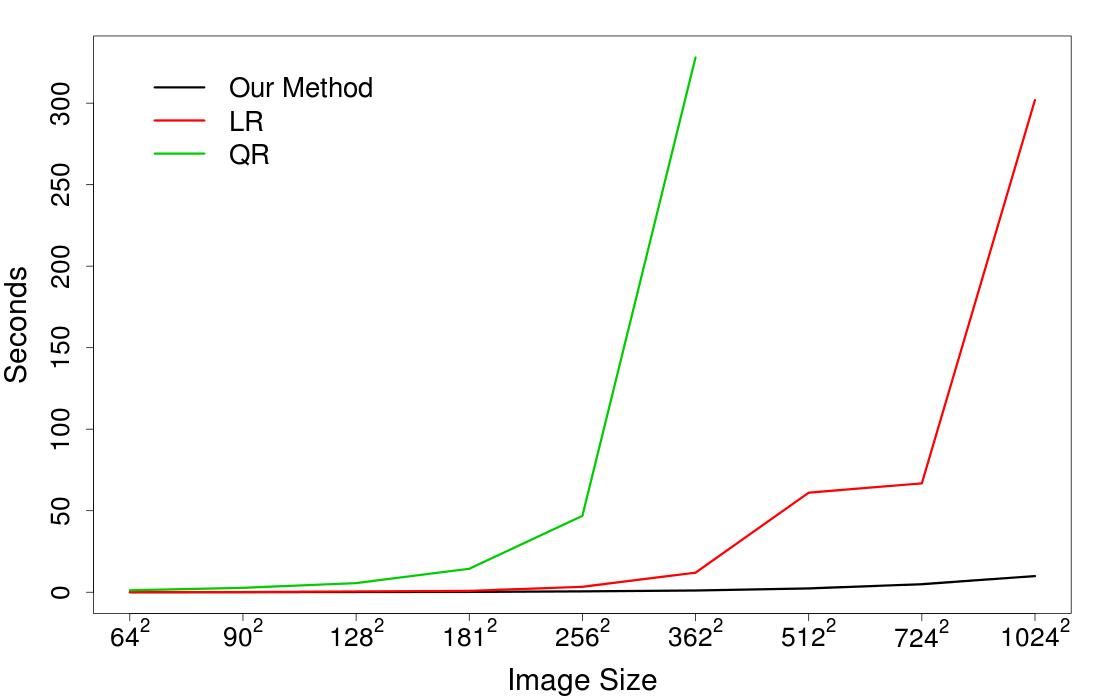}
	\caption{Comparison of scalability between our method and two relaxation based
	methods.}
	\label{fig:sc}
\end{figure}

\tablename~\ref{tab:err} shows how solutions by the three methods satisfy given
constraints. Here, we used two measures: $\text{ER}_\text{a}$ and
$\text{ER}_\text{b}$.
$\text{ER}_\text{a}$ denotes the percentage of constraints that are unsatisfied.
For instance, if the size (Sz) and variance constraints (Vr) are considered and
one of them is not satisfied, then $\text{ER}_\text{b}$ is counted as $0.5$.
$\text{ER}_\text{b}$ represents the deviation of the constraint values from the
inequality gap, thus providing a measure of the degree of satisfaction. For
instance, let the inequality gap for Sz be $[a,b]$ and the size of a computed
solution be $c$; $\text{ER}_\text{b}$ is then counted by $\min(|c-a|,|c-b|) /
((a+b)/2)$ if $c\notin[a,b]$ and counted by $0$ if $c\in[a,b]$. We observed that
solutions computed by three methods satisfy the constraints or are very close to
the constraints gap in general.

\begin{table}[t]
\renewcommand{\arraystretch}{1.3}
\caption{Comparison between our method and two relaxation based methods on
whether computed solutions satisfy constraints or not.} \label{tab:err}
\centering
\begin{tabular}{ccccccc} \hline
	& \multicolumn{2}{c}{Our Method} & \multicolumn{2}{c}{LR} &
	\multicolumn{2}{c}{QR}
	\\
	\cline{2-7}
	& $\text{ER}_\text{a}$ & $\text{ER}_\text{b}$ & $\text{ER}_\text{a}$ &
	$\text{ER}_\text{b}$ & $\text{ER}_\text{a}$ & $\text{ER}_\text{b}$ \\ \cline{2-7}
	$\pm 10\%$ & 14.35 & 0.88 & 22.69 & 0.40 & 7.2 & 0.06 \\
	$\pm 5\%$ & 29.63 & 1.12 & 23.84 & 0.54 & 26.00 & 0.42 \\ \hline
\end{tabular}
\end{table}

\section{Discussion and Conclusions} \label{sec:dis_con}

We have proposed a novel approach to constrained energy
minimization. Our method is efficient and can handle a very large class of
constraints including those that enforce second order statistics, which cannot
be achieved by using previously proposed methods.
To handle such second order constraints, the boundary length constraint in our
paper, we use a restricted search region in which the dual is computable. In our
cases, the dual maximum is always found in that restricted search region which
means that it does not affect the accuracy of our method.
Although our approach can be used with continuous relaxations based methods
\cite{lempitskyiccv09,RavikumarL06}, it would require repeated solutions of the
relaxed problem to find the dual maximum and would be much slower than our
method as shown in Section \ref{sec:exp}. We evaluated our method on image
segmentation with constraints described in Section \ref{sec:matstat}. Our method
produced segmentations, with less error, in much faster time compared to
state-of-the-art continuous relaxation based methods.



\bibliographystyle{abbrv}
\bibliography{ref}

\newpage

\setcounter{section}{0}

\renewcommand{\thesection}{\Alph{section}}

\noindent
{\bf\LARGE Appendix}

\bigskip
In Appendix, we show various segmentation results.
Each segmentation is labelled by pixel-wise error and running time, and obtained
with inequality gap $\pm 5\%$.


\section{Segmentation Results of Our Method with Various Constraint Combinations}
 In the following, segmentation results by our method with various constraint
 combinations are shown.

\begin{center}
\includegraphics[width=.47\columnwidth]{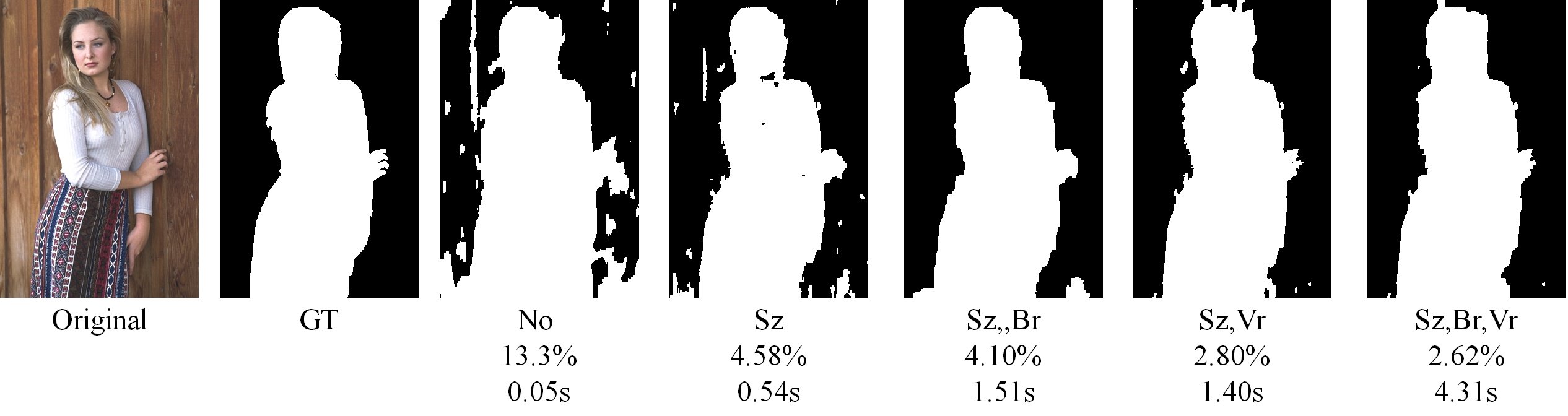} \qquad
\includegraphics[width=.47\columnwidth]{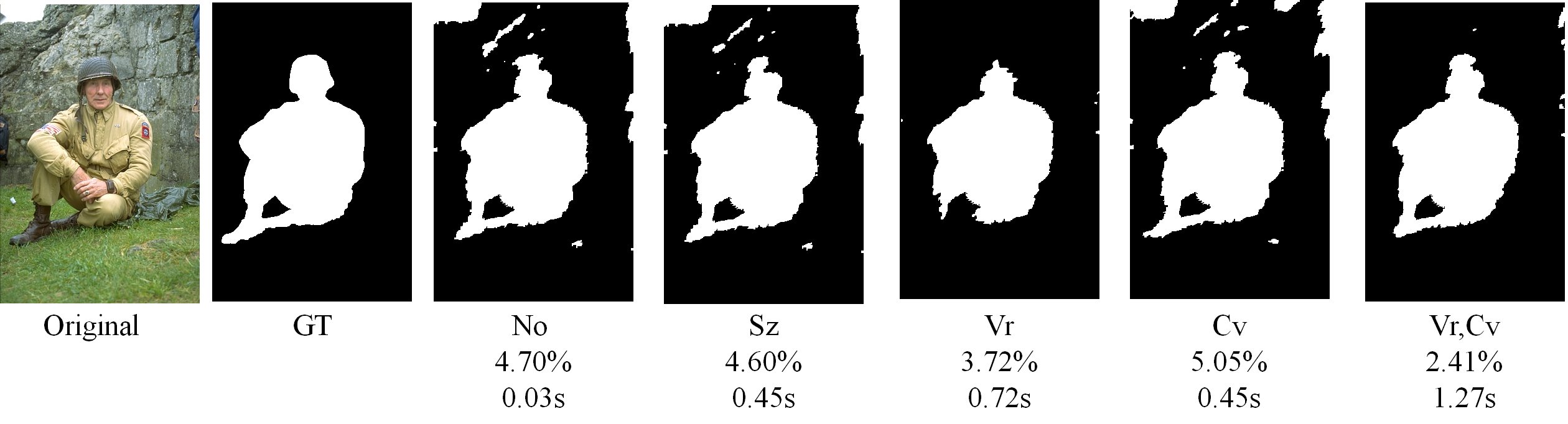}
\includegraphics[width=.47\columnwidth]{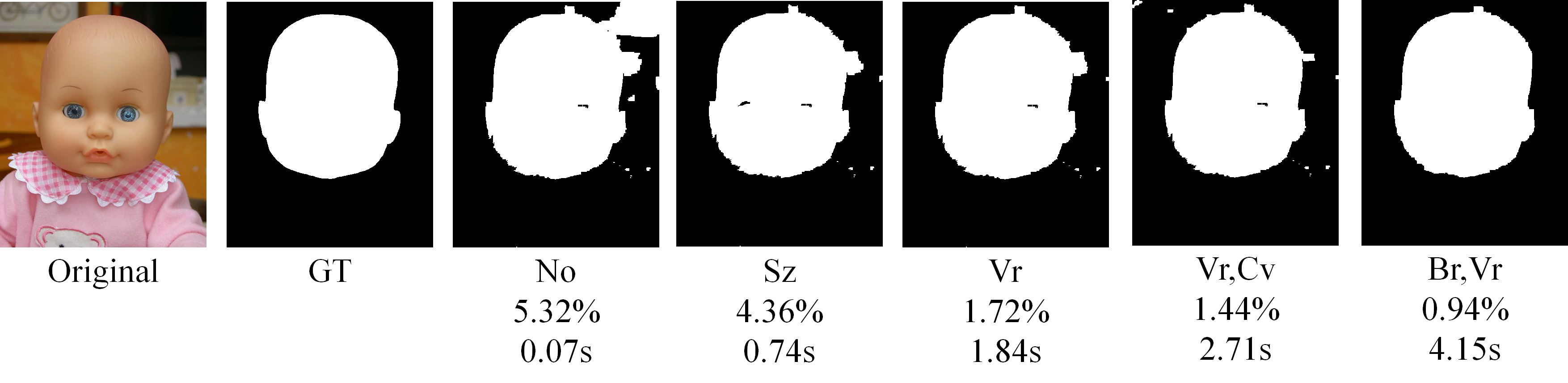} \qquad
\includegraphics[width=.47\columnwidth]{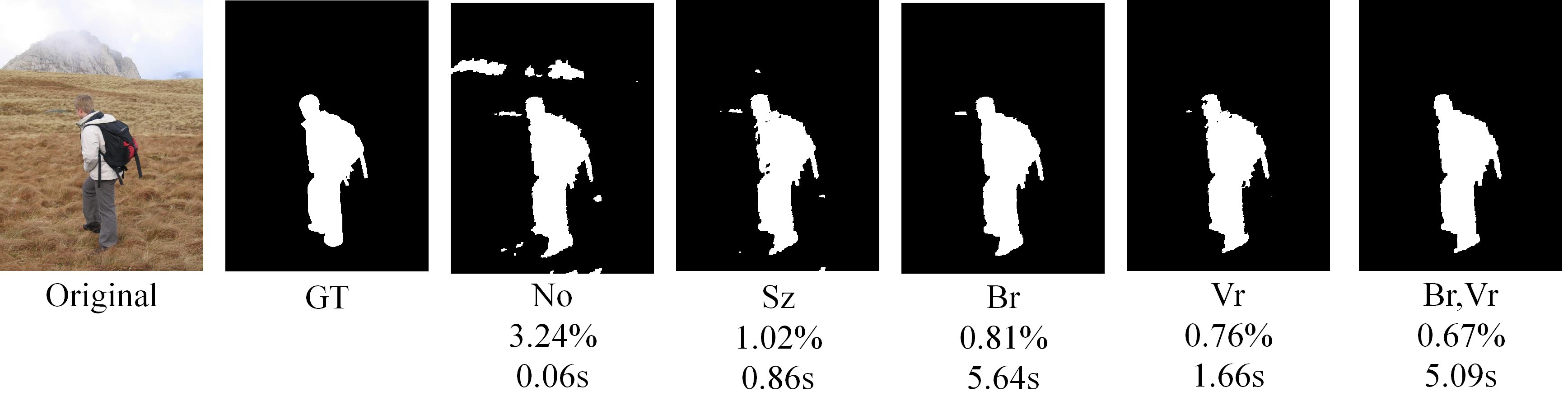}
\includegraphics[width=.47\columnwidth]{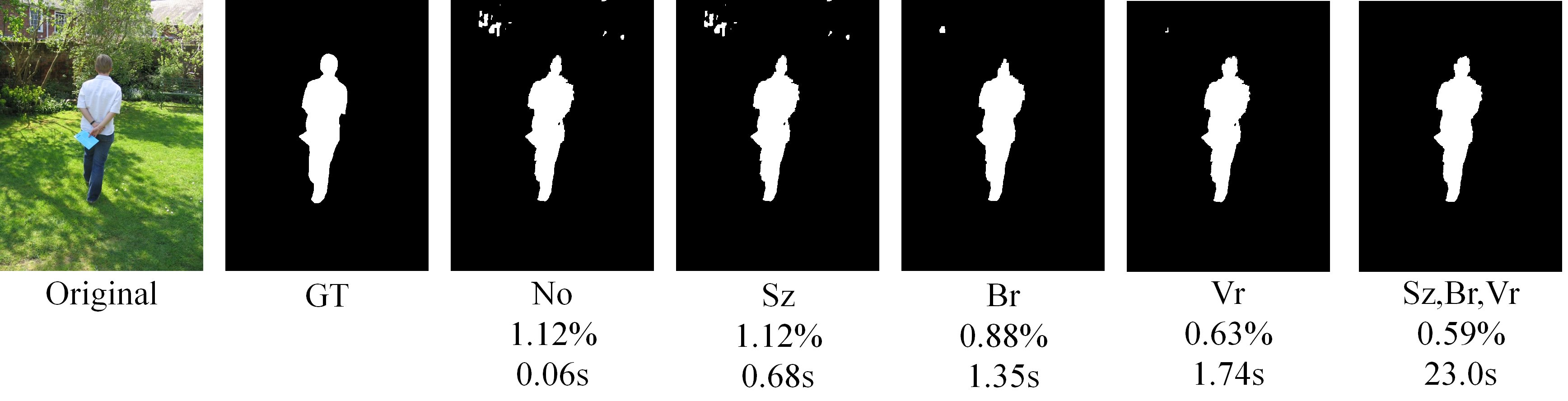} \qquad
\includegraphics[width=.47\columnwidth]{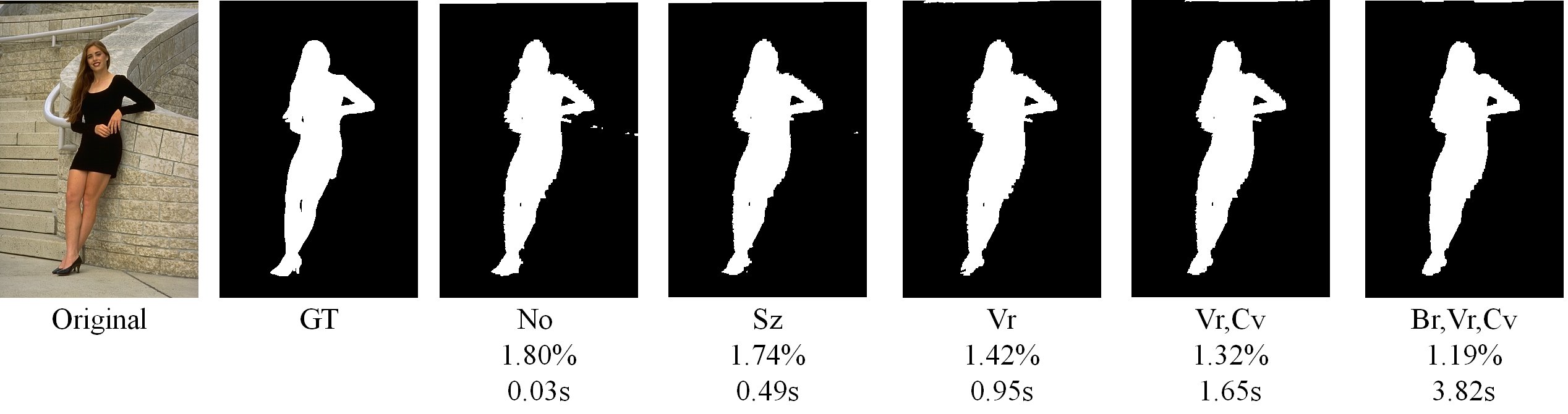}
\includegraphics[width=.47\columnwidth]{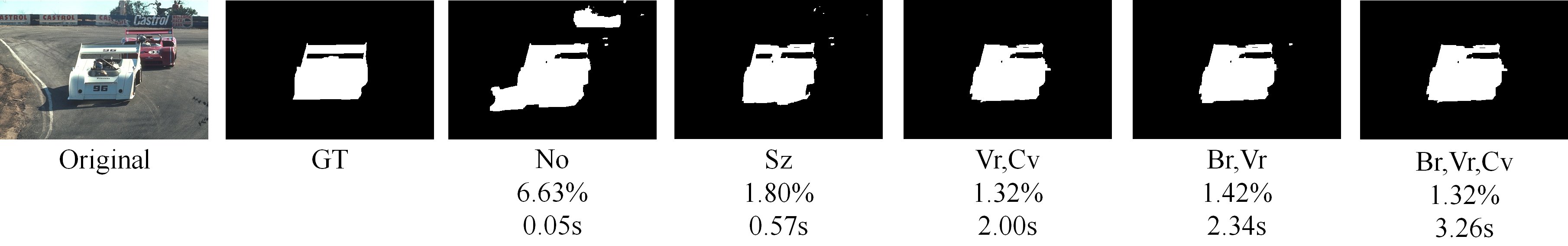} \qquad
\includegraphics[width=.47\columnwidth]{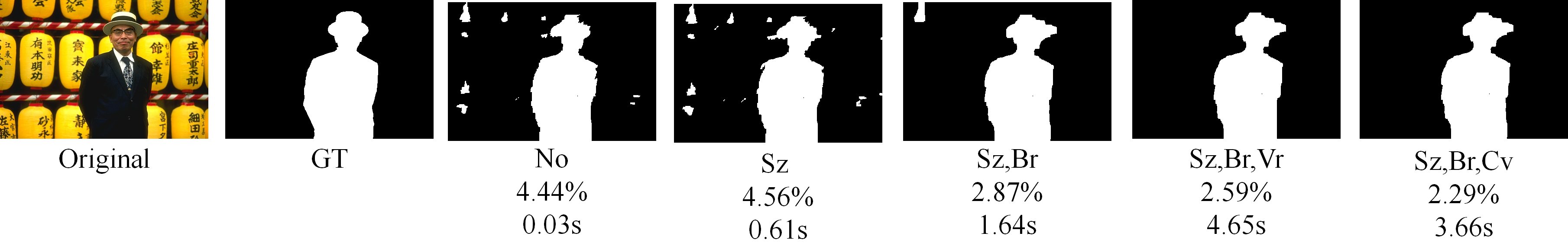}
\includegraphics[width=.47\columnwidth]{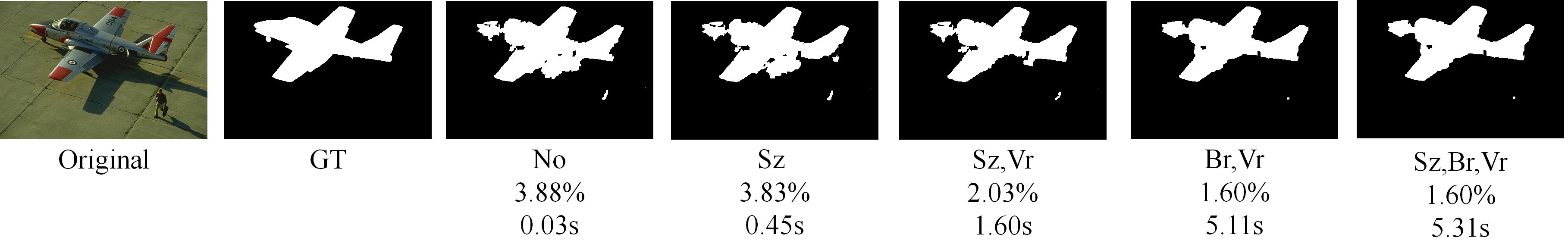} \qquad
\includegraphics[width=.47\columnwidth]{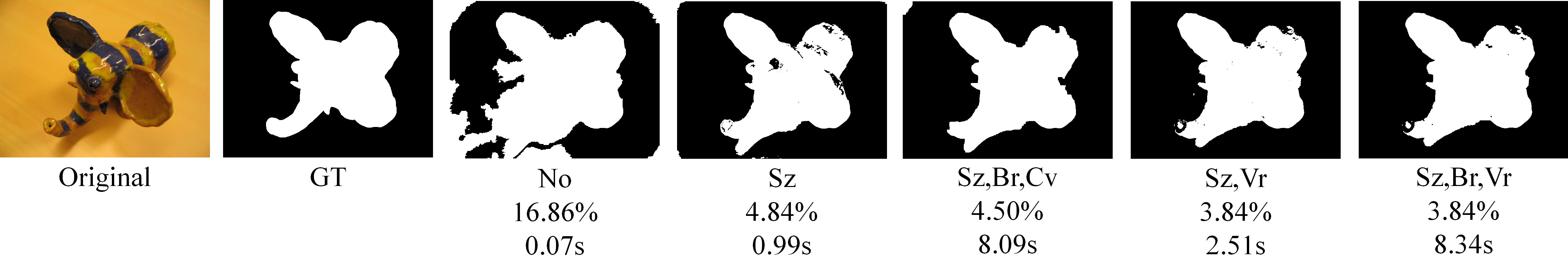}
\includegraphics[width=.47\columnwidth]{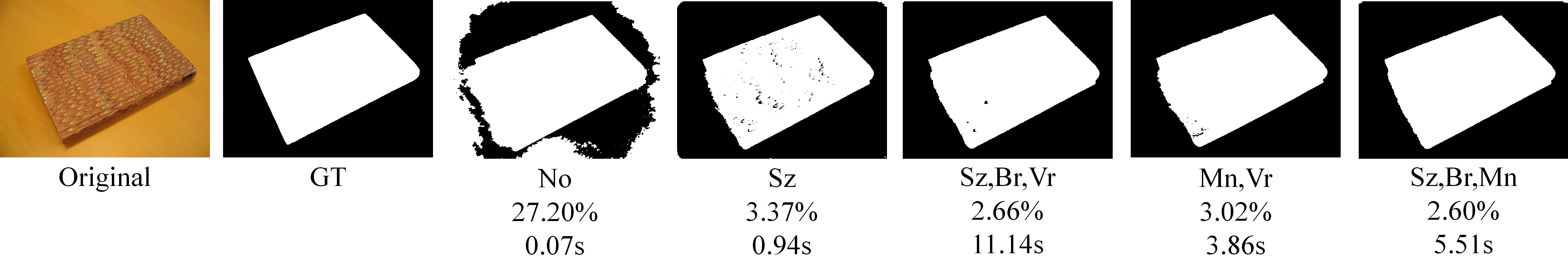} \qquad
\includegraphics[width=.47\columnwidth]{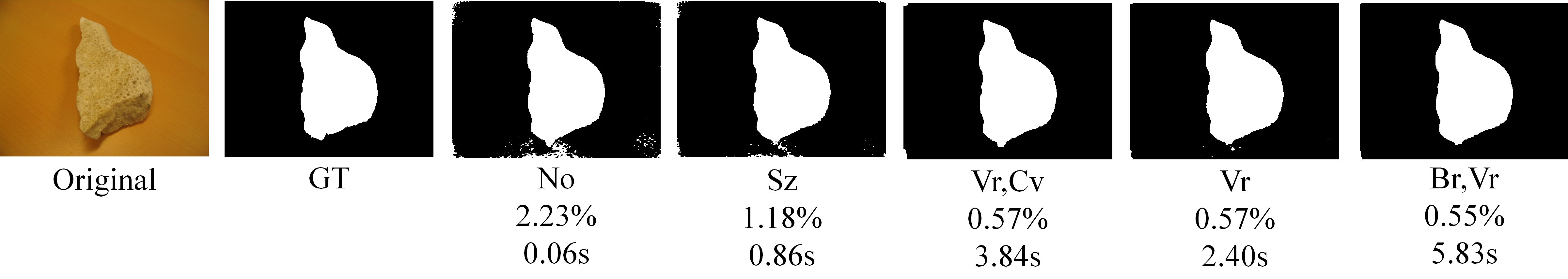}
\includegraphics[width=.47\columnwidth]{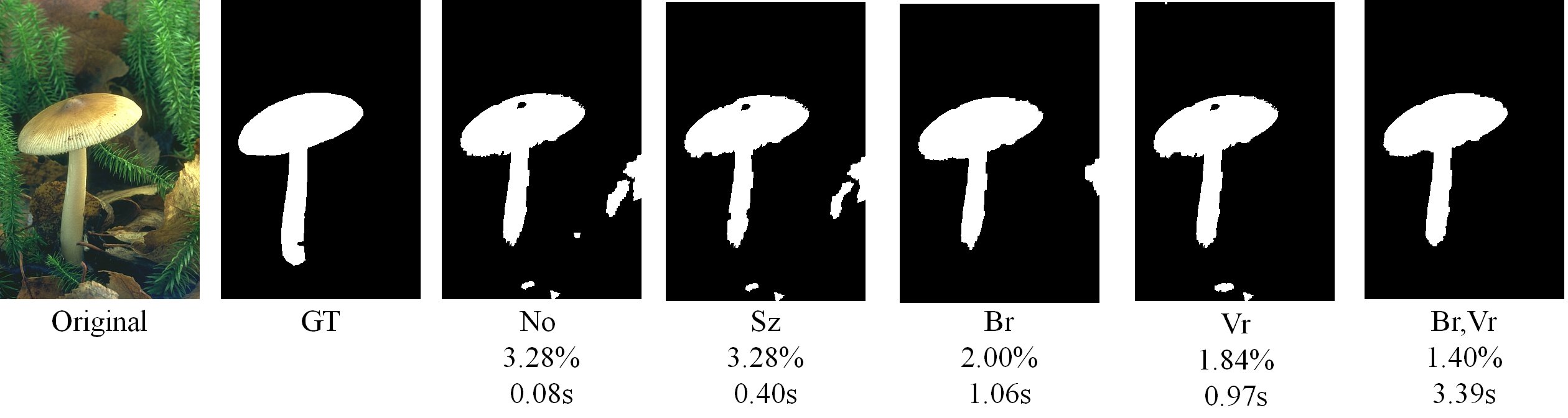} \qquad
\includegraphics[width=.47\columnwidth]{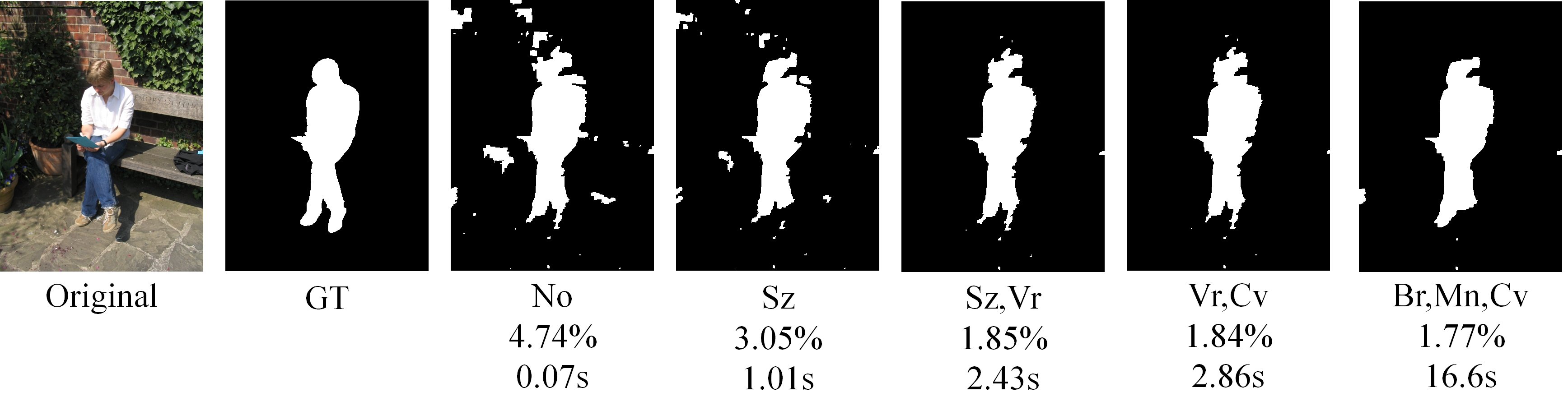}
\end{center}

\newpage

\section{Segmentation Results of Our Method, LR and QR} 

 In the following, segmentation results by our method, LR and QR are shown.
 
\begin{center}
\includegraphics[width=.47\columnwidth]{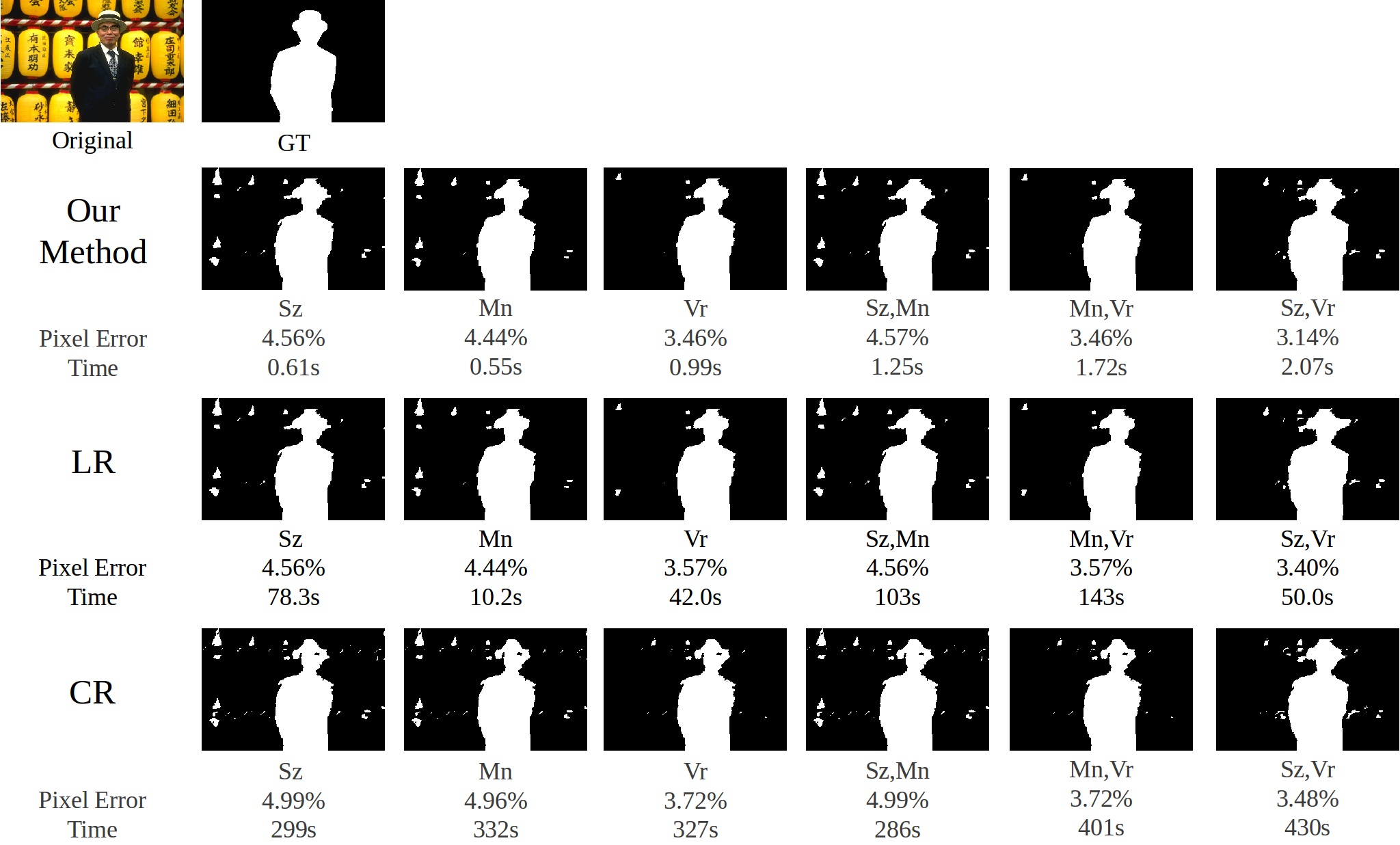} \qquad
\includegraphics[width=.47\columnwidth]{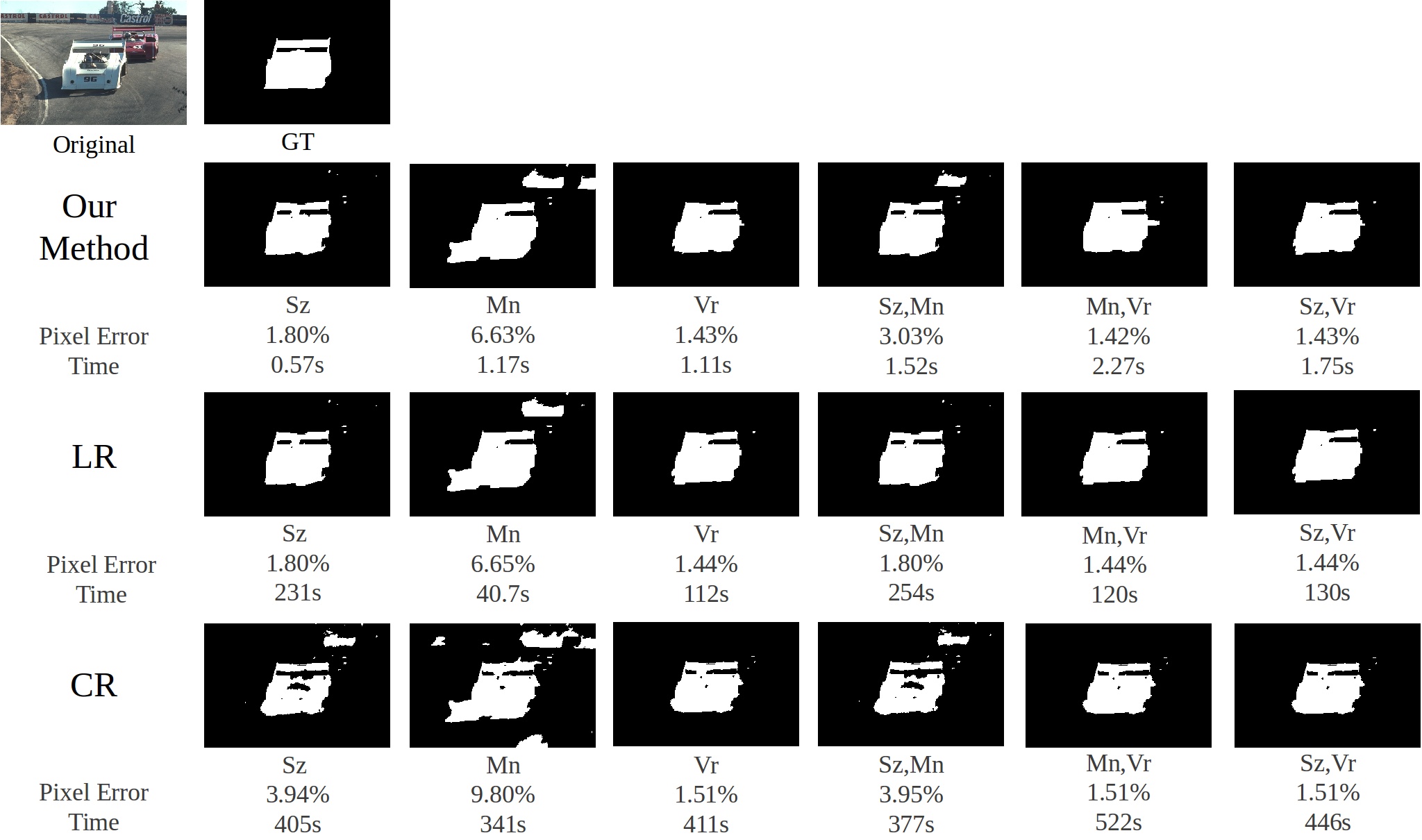}
\includegraphics[width=.47\columnwidth]{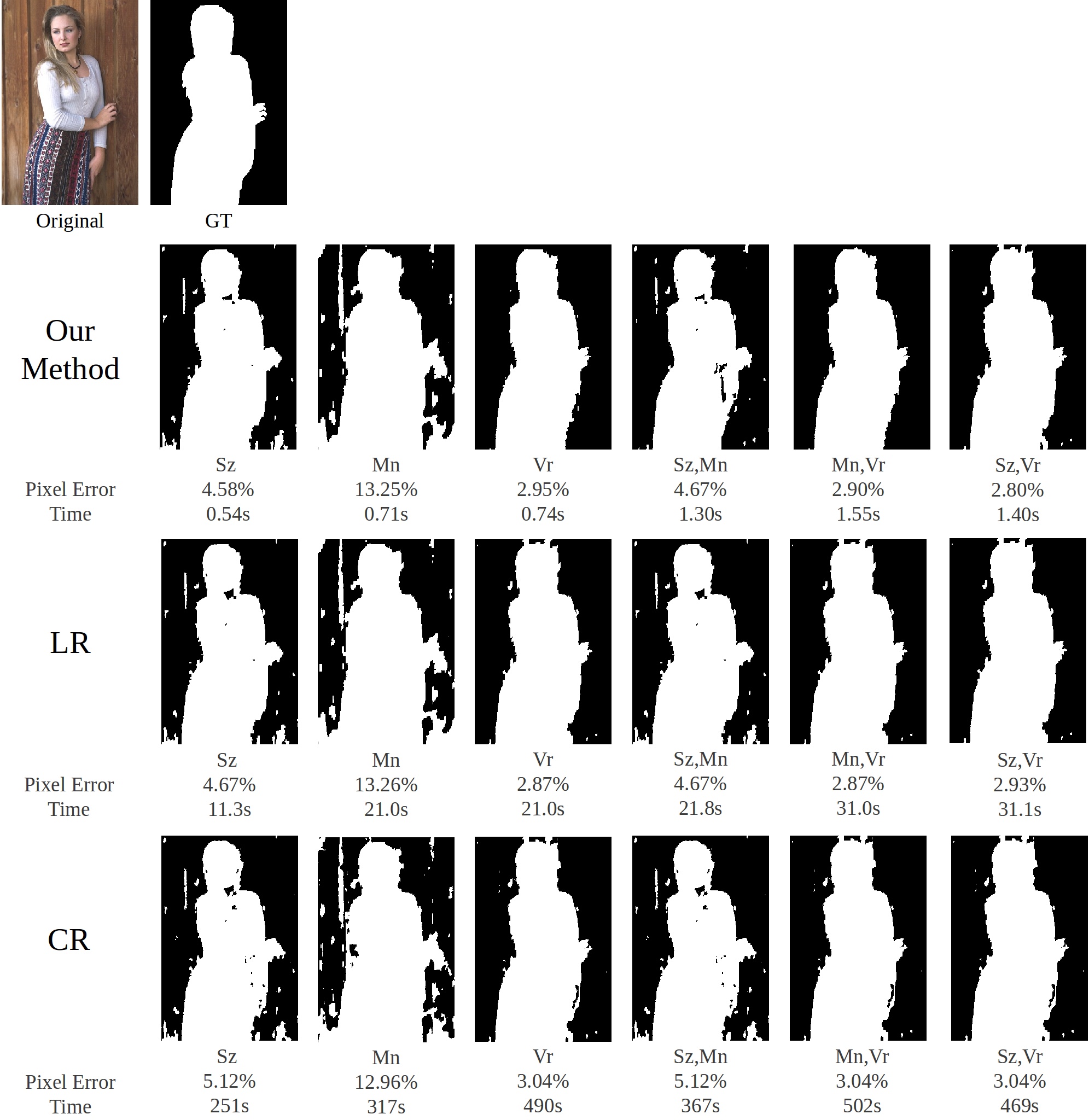} \qquad
\includegraphics[width=.47\columnwidth]{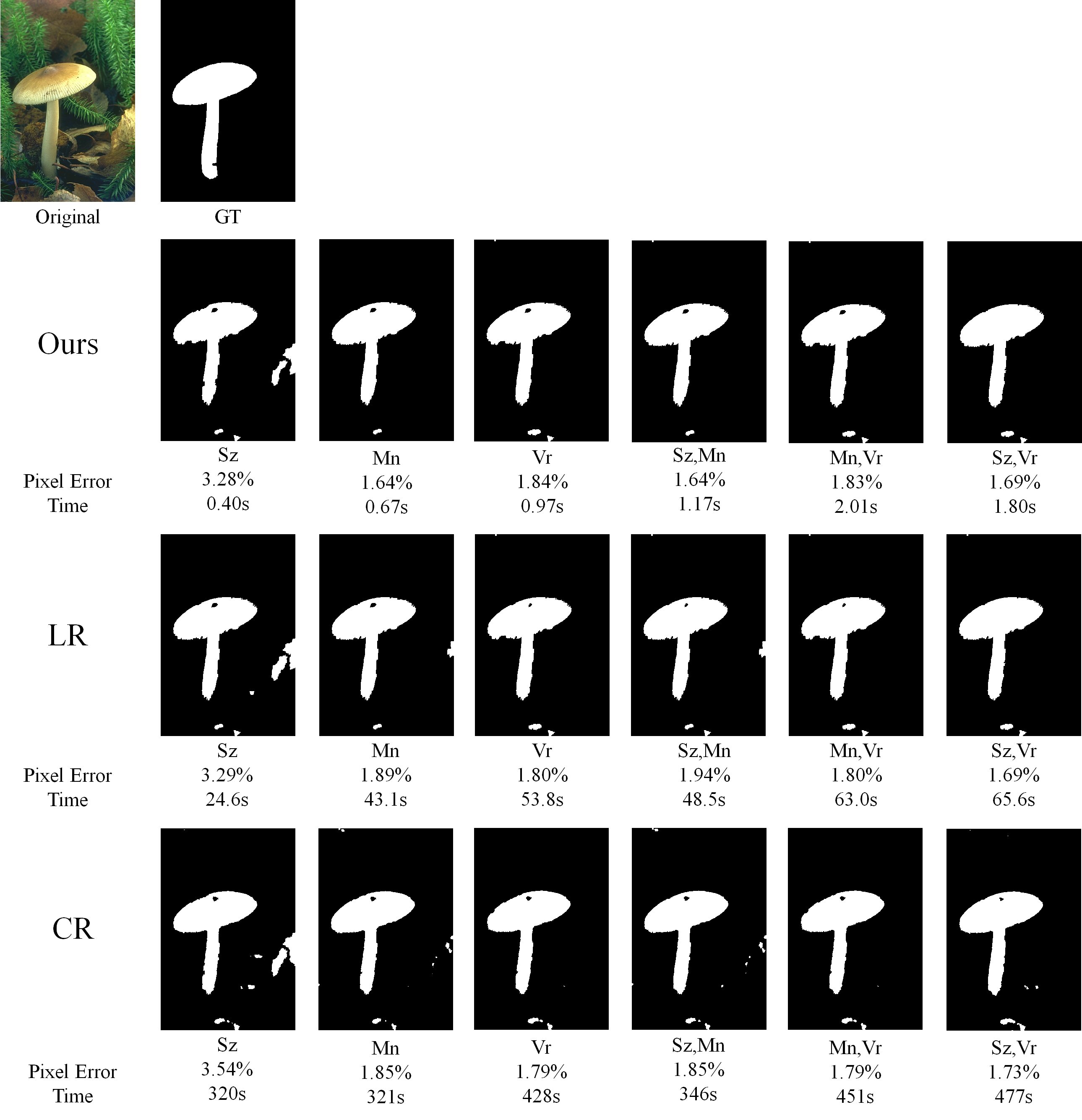}
\end{center}

\end{document}